\documentclass[11pt]{article}
\usepackage{amssymb, amsmath, amsthm, amsfonts, bbm}
\usepackage{algorithm, algpseudocode}

\usepackage{graphicx,caption,subcaption}
\usepackage{natbib}
\usepackage{color}
\usepackage{xcolor}
\usepackage[colorlinks,linkcolor=blue,citecolor=blue!70!black,urlcolor=black]{hyperref}

\usepackage{fullpage}

\usepackage{tikz}
\usetikzlibrary{arrows,arrows.meta}

\newtheorem*{theorem*}{Theorem}
\newtheorem{theorem}{Theorem}
\newtheorem{fact}{Fact}

\newtheorem{proposition}{Proposition}
\newtheorem{corollary}{Corollary}

\newtheorem{definition}{Definition}
\newtheorem{example}{Example}

\usepackage[capitalize,noabbrev]{cleveref}
\crefname{theorem}{Theorem}{Theorems}
\crefname{lemma}{Lemma}{Lemmas}
\crefname{corollary}{Corollary}{Corollaries}
\crefname{proposition}{Proposition}{Propositions}
\crefname{definition}{Definition}{Definitions}
\crefname{example}{Example}{Examples}
\crefname{remark}{Remark}{Remarks}
\crefname{assumption}{Assumption}{Assumptions}
\crefname{fact}{Fact}{Facts}
\crefname{result}{Result}{Results}
\crefname{problem}{Problem}{Problems}
\crefname{question}{Question}{Questions}

\usepackage{mathtools}

\newcommand{\R}{\mathbb{R}}  
\newcommand{\E}{\mathbb{E}}
\newcommand{\cA}{\mathcal{A}}

\newcommand{\cF}{\mathcal{F}}
\newcommand{\cH}{\mathcal{H}}
\newcommand{\cX}{\mathcal{X}}
\newcommand{\cD}{\mathcal{D}}
\newcommand{\oigap}{\mathrm{OIGap}}
\newcommand{\cP}{\mathcal{P}}
\newcommand{\yt}{\widetilde{y}}
\newcommand{\cY}{\mathcal{Y}}

\newcommand{\Tr}{\mathrm{Tr}}
\newcommand{\poly}{\mathrm{poly}}

\newcommand{\cZ}{\mathcal{Z}}
\newcommand{\cO}{\mathcal{O}}

\renewcommand{\epsilon}{\varepsilon}
\newcommand{\eps}{\epsilon}

\newcommand{\OIalg}{\text{Defensive Generation}}
\usepackage{xcolor}

\algnewcommand\algorithmicinput{\textbf{Input:}}
\algnewcommand\Input{\item[\algorithmicinput]}
\usepackage{amsmath}
\usepackage{amsthm}
\usepackage{mismath}

\DeclareMathOperator*{\argmin}{arg\,min}

\begin{document}

\title{Defensive Generation} 
\author{Gabriele Farina  \\MIT \\ \url{gfarina@mit.edu} \and Juan Carlos Perdomo \\ MIT, NYU \\ \url
{j.perdomo.silva@nyu.edu}}

\maketitle

\begin{abstract}\noindent
We study the problem of efficiently producing, in an online fashion, generative models of scalar, multiclass, and 
vector-valued outcomes that cannot be falsified on the basis of the observed data and a pre-specified collection of computational tests. 
Our contributions are twofold. First, we expand on connections between online high-dimensional multicalibration with respect to an RKHS and recent advances in expected variational inequality problems, enabling efficient algorithms for the former. We then apply this algorithmic machinery to the problem of outcome indistinguishability.
Our procedure, Defensive Generation, is the first to efficiently produce online outcome indistinguishable generative models of non-Bernoulli outcomes that are unfalsifiable with respect to infinite classes of tests, including those that examine higher-order moments of the generated distributions. Furthermore, our method runs in near-linear time in the number of samples and achieves the optimal, vanishing $1/\sqrt T$ rate for generation error.
\end{abstract}

\section{Introduction}
Supervised learning frames learning in terms of loss minimization. 
Under this paradigm, an algorithm succeeds if it is able to find a prediction rule that has low excess risk over the underlying data distribution, with respect to some loss and hypothesis class.
Drawing on a rich set of ideas from complexity theory and cryptography, outcome indistinguishability offers a different perspective \citep{dwork2021outcome}. 
A learner succeeds if it finds a generative model of outcomes that cannot be falsified on the basis of the observed data and a pre-specified collection of computational tests. 

The motivation behind outcome indistinguishability is that if no computational process can tell the difference between the ``true'' data produced by Nature and data produced by the Learner's model, then this model effectively \emph{is} the ``real'' data generating process. 
The validity of the Learner's model is staked on its ability to withstand falsification.

In this paper, we study the problem of designing computationally efficient algorithms that provably find outcome indistinguishable generative models for rich classes of outcomes. We study this question in the online setting where  data is arbitrarily (possibly adversarially) chosen. Samples are not assumed to be drawn i.i.d. from any distribution, yet the Learner is tasked with finding a generative model that, for all intents and purposes, ``looks like'' it generated the observed sequence.

Our main contribution is an efficient online procedure that finds generative models of scalar, multiclass, or vector-valued outcomes that are provably outcome indistinguishable with respect to rich, infinite collections of tests that live in a vector-valued reproducing kernel Hilbert space. 
A generative model here is a function that given a set of features $x$ produces a conditional distribution $\mu$ over outcomes $\cY$.
Prior work in this area was either restricted to the case of finding generative models of simple binary outcomes, or required explicit enumeration over the set of tests (which ruled out efficiently guaranteeing indistinguishability with respect to super polynomially-sized classes). Our algorithm not only expands the scope of settings where one can guarantee indistinguishability, it also achieves the optimal $\sqrt{T}$ regret bound for this problem (see \citet{vovk2007k29} for a lower bound). 

On a technical level, our result comes from reducing online outcome indistinguishability to high-dimensional multicalibration and using recent advances on expected variational inequalities (EVIs) to efficiently solve these underlying calibration problems. A core part of our work hence relies on exploring the spiderweb of connections between indistinguishability, online calibration, research on learning in games, and nonlinear optimization.

On a conceptual level, we highlight how our work provides a different perspective on generative modeling, based on online learning as its foundation.
Much of the recent theoretical literature assumes that there is a fixed distribution we wish to sample from, and relies on function approximation assumptions to guarantee that the learned model is close in statistical distance to the truth. Our work on online OI, on the other hand, does not guarantee closeness in statistical distance nor does it rely on unverifiable function approximation assumptions. Instead, it provides an end-to-end unconditional guarantee that the learned model is \emph{computationally} indistinguishable from the ``truth''; where truth is in quotation marks since data is arbitrary and there is no underlying distribution to speak of.

Lastly, we produce these indistinguishable models through defensive forecasting, an algorithmic methodology pioneered in \cite{vovk2005defensive}. Rather than trying to make a good guess around what the true data will be, defensive forecasting views predictions as a game. A good prediction is not one which aims to mimic Nature, but rather one that ensures that the generative model looks good in hindsight (from the perspective of the set of tests) \emph{no matter} the choice of Nature.  

We provide an overview of our contributions in \cref{sec:contributions}.
The interested reader can skip ahead to \Cref{sec:examples} to see example guarantees of our Defensive Generation algorithm in domains like language generation, learning linear dynamical systems, and weather forecasting.
A technical overview is given in \cref{sec:techniques}.

\subsection{Overview of Contributions}
\label{sec:contributions}

We design algorithms that work in the following online protocol. At every time step $t$, the Learner sees features $x_t$ chosen by Nature and randomly produces a distribution $\mu_t$ over the outcome space $\cY$ and an auxiliary vector of statistics $p_t$ (think $p_t=g(\mu_t)$ for some function $g$). Lastly, Nature reveals an outcome $y_t \in \cY$. 
Our goal is to design algorithms which achieve the following desideratum.
\begin{definition}
An algorithm $\cA$ satisfies online outcome indistinguishability with respect to a class of distinguishers $\cF$ if it produces distributions $\mu_t$ over $\cY$ and statistics $p_t$ such that for any $f\in \cF$,
\begin{align}
\oigap_T(f) \coloneqq \bigg|    \sum_{t=1}^T \E_{p_t}[f(x_t, p_t, y_t)] - \sum_{t=1}^T \E_{p_t, \yt_t \sim \mu_t} [f(x_t, p_t, \yt_t)] \bigg|,
\label{eq:oigap}
\end{align}
is $o(T)$ regardless of Nature's choices of $(x_t,y_t)$. We define $\oigap_T \coloneqq \sup_{f \in \cF} \oigap_T(f)$.
\end{definition}
On the left, we have the outputs of the distinguishers $f$ on the real outcomes $y_t$. On the right, there is the expected value of the distinguisher over the outcomes $\yt_t \sim \mu_t$ sampled by the Learner's model. Dividing both sides of the above equation by $T$, an algorithm guarantees online outcome indistinguishability if the value of every function in $\cF$ is the same when: $(1)$ averaged over the realized sequence of outcomes $y_t$, or $(2)$ the simulated outcomes $\yt_t$. No distinguisher can spot a difference between the real data and simulated data:
\begin{align*}
\lim_{T\rightarrow \infty}\bigg|    \frac{1}{T} \sum_{t=1}^T \E_{p_t}[f(x_t, p_t, y_t)] - \frac{1}{T} \sum_{t=1}^T \E_{p_t, \yt_t \sim \mu_t}[f(x_t, p_t, \yt_t)] \bigg| \rightarrow 0.
\end{align*}
As a thought experiment, if the outcomes $y_t$ were truly random and drawn from $\mu_t$, by a martingale argument, we should expect that for any $f$, $\oigap_T(f) \approx \sqrt{T}$. Our main contribution is a new algorithm, $\OIalg$, that achieves exactly the same guarantee, online, and for broad classes of adversarially chosen outcomes $y_t$.
\begin{theorem}[Informal]
The following are true regarding the $\OIalg$ procedure. Furthermore, in each setting, it runs in time $\cO(\poly(d) \cdot \log(t))$ at time $t$ and has $\oigap_T \leq \cO(\sqrt{T})$.
 \begin{enumerate}
	\item \underline{Multiclass outcomes}, $\cY = [d]$. The procedure outputs distributions $\mu_t$ that are online outcome indistinguishable with respect to any infinite collection of distinguishers $f(x,p,y)$ that form an RKHS, and which have full access to the entire conditional distribution $\mu_t$ ({i.e.}, $p_t=\mu_t$). 

    As a specific example, this in particular implies indistinguishability with respect to all linear functions at a rate bounded by $d\sqrt{T}$. See \Cref{example:multiclass}.

	\item \underline{Scalar outcomes}, $\cY = [-1,1]$. $\OIalg$ produces distributions $\mu_t$ that are online outcome indistinguishable with respect to all low-degree tests in any RKHS that only examine the first $d$ moments of $\mu_t$. That is, $p_t = (1,\E_{\mu_t}[\yt_t], \E_{\mu_t}[\yt_t^{\;2}], \dots , \E_{\mu_t}[\yt_t^{\;d}])$.

    For Boolean $x$, this implies that one can produce distributions over scalar $\yt$ whose first $d$ conditional moments, $\E[\yt^{\;j}|c(x) =1]$, for $j=1$ to $d$, match those of the observed sequence over all subsets of the hypercube computable by low-depth decision trees $c(x)$. See \Cref{example:scalar}.

	\item \underline{High-dimensional outcomes}, $\cY = \{y\in \R^d: \|y\| \leq 1\}$. It outputs $\mu_t$ that are online OI with respect to distinguishers in any RKHS that examine the mean and covariance  of $\mu_t$. That is, $$p_t= (\E_{\mu_t}[\yt_t], \E_{\mu_t}[\yt_t \yt_t^\top]).$$

    This in particular implies indistinguishability with respect to the infinite set of distinguishers that can lie in the span of a fixed set of (nonlinear) features $\Phi(x,p)$. See \Cref{example:vector2}.

    Moreover, if the distinguishers only examine the first moments, $p=\E_{\mu_t}[\yt_t]$, then the Defensive Generation algorithm works for any compact convex set in $\R^d$, not just the unit ball. See \Cref{example:vector}.
\end{enumerate}
\end{theorem}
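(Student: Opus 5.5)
The approach is to reduce online outcome indistinguishability to online multicalibration in a vector-valued RKHS, and then solve that calibration problem with a defensive-forecasting/EVI procedure. In each of the three settings I would fix a feature map $\phi:\cY\to\R^m$ such that any distribution $\mu$ in the relevant family is summarized by $p=g(\mu)=\E_{\yt\sim\mu}[\phi(\yt)]$. The three choices are:
\begin{itemize}
\item the one-hot embedding $\phi(y)=e_y$ for $\cY=[d]$, so that $p=\mu$;
\item $\phi(y)=(1,y,\dots,y^d)$ for $\cY=[-1,1]$;
\item $\phi(y)=(y,yy^\top)$ for the unit ball.
\end{itemize}
The distinguishers are assumed to lie in an RKHS $\cH$ whose elements are linear in $\phi(y)$. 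This holds automatically in the multiclass case, and is the meaning of "examines only the first $d$ moments" in the other two. Concretely, we may write $f(x,p,y)=\langle h(x,p),\phi(y)\rangle$ with $h$ in a vector-valued RKHS with kernel $K((x,p),(x',p'))$. Then $\E_{\yt\sim\mu_t}f(x_t,p_t,\yt)=\langle h(x_t,p_t),p_t\rangle$, and
\[
\oigap_T(f)=\Big|\sum_t \E_{p_t}\langle h(x_t,p_t),\phi(y_t)-p_t\rangle\Big|
=\Big|\Big\langle h,\sum_t \E_{p_t} K(\cdot,(x_t,p_t))(\phi(y_t)-p_t)\Big\rangle_{\cH}\Big|.
\]
This is exactly a multicalibration error in $\cH$. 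By Cauchy--Schwarz it is at most $\|h\|_{\cH}\,\|S_T\|_{\cH}$, where $S_T$ denotes the accumulated residual.

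The second step bounds $\|S_T\|$ by defensive forecasting. Expanding the square gives
\[
\|S_T\|^2=\|S_{T-1}\|^2+2\,\E\langle S_{T-1}(x_t,p_t),\phi(y_t)-p_t\rangle+\E\|K^{1/2}(\phi(y_t)-p_t)\|^2 .
\]
It therefore suffices to choose, at each round, a randomized $p_t$ in the convex set $C=\mathrm{conv}(\phi(\cY))$ such that the cross term $\E_{p_t}\langle S_{t-1}(x_t,p_t),z-p_t\rangle\le 0$ for every $z\in C$. This is an expected variational inequality for the operator $F(p)=-S_{t-1}(x_t,p)$ over $C$. By the earlier EVI results, such distributions exist for any continuous operator and can be computed to accuracy $\eps_t$ in $\poly(d)\log(1/\eps_t)$ time. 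Taking $\eps_t\approx 1/t$, with per-round evaluations of $S_{t-1}$ maintained incrementally, gives the stated $\poly(d)\log t$ cost. Summing the recursion with bounded kernel $\sup K\le\kappa$ and bounded $\phi$ yields $\|S_T\|^2\le O(\kappa T)+\sum_t\eps_t$, so $\oigap_T\le O(\|h\|\sqrt{T})$. Specializing to linear $f$ in the multiclass case recovers the $d\sqrt T$ rate.

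The last step returns from statistics to distributions. The algorithm outputs some $p_t\in C$, and we need an actual $\mu_t$ on $\cY$ with $g(\mu_t)=p_t$. Because $C$ is the convex hull of $\phi(\cY)$, Carathéodory guarantees a finitely supported $\mu_t$ with this property.
\begin{itemize}
\item In the multiclass case this is trivial.
\item For the moment-curve case, $C$ is the moment cone slice, and an explicit atomic representing measure can be computed efficiently, for instance via the Gaussian-quadrature or Hankel-matrix construction.
\item For the ball with $(y,yy^\top)$, $C$ is the set of pairs $(m,\Sigma)$ with $\Sigma\succeq mm^\top$ and $\Tr\Sigma\le1$. A representing measure can be built from an eigendecomposition of $\Sigma-mm^\top$, using symmetric pairs of atoms and rescaling to stay in the ball.
\item For first moments only, $C$ is the given compact convex set, and any convex-combination decomposition works.
\end{itemize}

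I expect the main obstacle to be this final step, combined with the EVI step. For the EVI step, the solver must be run over $C$, so it needs an efficient separation or projection oracle for $C$. This is easy for the simplex and for the ball-first-moment sets. It requires semidefinite characterizations for the moment cone and for the covariance set. I would first try to establish those SDP descriptions and the associated rounding to explicit measures, and only then verify that the near-linear runtime claim survives those oracle costs.
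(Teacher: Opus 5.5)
Your architecture matches the paper's. You linearize $f=h^\top s(y)$ so that $\oigap_T(f)$ becomes a multicalibration error, bounded by $\|h\|_{\cH}$ times the RKHS norm of the accumulated residual. You expand that squared norm Blackwell-style and kill the cross term with an expected variational inequality over $\cZ$, solved by the ellipsoid-based method. You then backfit a measure $\mu_t$ with $\E_{\mu_t}[s(\yt)]=p_t$.

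\textbf{The gap: backfitting on the unit ball.} A mass at $v$ plus \emph{symmetric} pairs $v\pm s_iu_i$, shrunk to stay in the ball, cannot represent many points of $\{(v,Q): Q\succeq vv^\top,\ \Tr Q\le 1\}$ with $v\neq 0$. Already for $d=1$, the point $(v,Q)=(1/2,1)$ lies in this set. But $\E[y^2]=1$ with $|y|\le 1$ forces $\mu=\tfrac34\delta_{1}+\tfrac14\delta_{-1}$, which is not symmetric about $1/2$. Quantitatively, capping the displacement at $\min(t_i^+,|t_i^-|)$ forces pair mass at least $\sigma_i/\min(t_i^+,|t_i^-|)^2$. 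These masses can sum to more than $1$: here the sum is $3$, and it exceeds $1$ even at interior points such as $(1/2,0.9)$.

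This is not cosmetic. That construction is also what would prove $\{Q\succeq vv^\top,\ \Tr Q\le 1\}\subseteq\mathrm{conv}\,s(\cY)$. You need that inclusion before you may run the EVI over the SDP description and then invoke Carath\'eodory.

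\textbf{The fix (the paper's Algorithm~3).} Make each pair asymmetric and push both atoms to the sphere. Write $Q-vv^\top=\sum_i\sigma_iu_iu_i^\top$. Let $t_i^+>0>t_i^-$ be the roots of $\|v+tu_i\|_2=1$, and put masses $\lambda_i^\pm=\sigma_i/\bigl(t_i^\pm(t_i^\pm-t_i^\mp)\bigr)$ at $v+t_i^\pm u_i$. Then $\lambda_i^+t_i^++\lambda_i^-t_i^-=0$ and $\lambda_i^+(t_i^+)^2+\lambda_i^-(t_i^-)^2=\sigma_i$. Since $t_i^+t_i^-=-(1-\|v\|_2^2)$ in \emph{every} direction, each pair has total mass $\sigma_i/(1-\|v\|_2^2)$. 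The trace condition $\sum_i\sigma_i=\Tr(Q-vv^\top)\le 1-\|v\|_2^2$ is then exactly what leaves nonnegative mass $\lambda_0$ at $v$. If $\|v\|_2=1$, this forces $Q=vv^\top$ and $\mu=\delta_v$.

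\textbf{Smaller points.}
\begin{itemize}
\item Maintaining $S_{t-1}$ incrementally requires an explicit finite feature map of polynomial dimension. For a general RKHS, each evaluation of $S_t$ sums kernel values over all past support points and costs order $t$, as the paper's appendix acknowledges. So the $\poly(d)\log t$ per-round claim really concerns the finite-feature instances used in the examples.
\item The EVI step needs no continuity of the operator; a kernel-defined $S_t$ need not be continuous in $p$. Approximate EVI solutions exist, and are found by the ellipsoid method, for any bounded operator.
\item Your Gaussian-quadrature/Hankel route for the scalar moment case is a legitimate alternative to the paper's SOS-duality and root-finding construction.
\end{itemize}
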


To the best of our knowledge, this is the first algorithm that can efficiently guarantee online outcome indistinguishability with respect to infinite classes $\cF$ beyond the case of Bernoulli outcomes.
As mentioned previously, the main backbone of this result is a new near-linear-time algorithm for online multicalibration in high-dimensions. We state this problem formally:
\begin{definition}
Assume that at every time step, Nature selects features $x_t \in \cX$ arbitrarily and then the Learner randomly samples a forecast $p_t$ in a compact, convex set $\cZ \subset \R^d$. Lastly, Nature reveals the target $z_t \in \cZ$.
An algorithm $\cA$ guarantees online multicalibration with respect to a class of functions $\cH \subseteq \{\cX \times \cZ\} \rightarrow \R^d$ if it randomly produces a sequence of $p_t$ such that 
\begin{align*}
\sup_{h \in \cH}\bigg|    \sum_{t=1}^T \E_{p_t} \bigr[ h(x_t,p_t)^\top (z_t - p_t) \bigr] \bigg| \leq o(T) 
\end{align*}
regardless of Nature's choices of $(x_t,z_t) \in \cX \times \cZ$ in this online protocol.
\end{definition}
Unlike other versions of online $(\ell_1)$ calibration studied in the literature \citep{peng2025high,fishelson2025highdimensional} where errors are summed up over a grid of predicted values $p_t=v$, one can achieve error $\eps$ for this multicalibration problem after $\cO(\eps^{-2})$ many rounds for rich classes of functions $\cH$. 

This is in contrast to the lower bound of $d^{\;\poly(1/\eps)}$ many rounds for the $\ell_1$ version. In particular, we design an algorithm that guarantees $\sqrt{T}$ regret for any vector-valued RKHS $\cH$. Our construction relies on recent advances in solving expected variational inequality problems \citep{zhang2025expected}, along with the high-dimensional defensive forecasting algorithm from \cite{kernelOI}. We state its guarantees below:
\begin{theorem}[Informal]
Let $\cH \subseteq \{\cX \times \cZ \rightarrow \R^d\}$ be any vector-valued RKHS with corresponding matrix valued kernel $\Gamma$. Then, the high-dimensional defensive forecasting algorithm can be efficiently implemented to run in time $\cO(\poly(d) \log(t))$ at time step $t$ and guarantee that for any $h \in \cH$,
\begin{align*}
\bigg|    \sum_{t=1}^T \E_{p_t} \bigr[ h(x_t,p_t)^\top (z_t - p_t) \bigr] \bigg| \leq \|h\|_{\cH} \sqrt{\sum_{t=1}^T\E_{p_t}(z_t - p_t)^\top\Gamma((x_t,p_t), (x_t,p_t))(z_t - p_t)}.
\end{align*}
Here, $\|\cdot\|_{\cH}$ denotes the norm of the functions in the RKHS $\cH$.
\end{theorem}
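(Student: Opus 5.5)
The plan is the standard defensive-forecasting potential argument, lifted to a vector-valued RKHS, with the per-round choice of $p_t$ carried out by an expected variational inequality (EVI) solver.

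\textbf{Reduction to a norm bound.} Let $\Phi(x,p)$ denote the feature map of $\cH$, so that $h(x,p)^\top v = \langle h, \Gamma(\cdot,(x,p)) v\rangle_{\cH}$ for every $v\in\R^d$. Then
\begin{align*}
\sum_{t=1}^T \E_{p_t}\bigl[h(x_t,p_t)^\top(z_t-p_t)\bigr] = \Bigl\langle h, S_T\Bigr\rangle_{\cH}, \qquad S_T \coloneqq \sum_{t=1}^T \E_{p_t}\bigl[\Gamma(\cdot,(x_t,p_t))(z_t-p_t)\bigr].
\end{align*}
By Cauchy--Schwarz it therefore suffices to show
\begin{align*}
\|S_T\|_{\cH}^2 \le \sum_{t=1}^T \E_{p_t}\bigl[(z_t-p_t)^\top\Gamma((x_t,p_t),(x_t,p_t))(z_t-p_t)\bigr].
\end{align*}

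\textbf{Telescoping potential.} Expand the increment of the squared norm:
\begin{align*}
\|S_t\|^2-\|S_{t-1}\|^2 = 2\,\E_{p_t}\bigl[(z_t-p_t)^\top g_t(p_t)\bigr] + \|\E_{p_t}[\cdots]\|_{\cH}^2,
\end{align*}
where $g_t(p) \coloneqq \sum_{s<t}\E_{p_s}\bigl[\Gamma((x_t,p),(x_s,p_s))(z_s-p_s)\bigr]$ is the evaluation of $S_{t-1}$ at $(x_t,p)$. By Jensen's inequality (convexity of the squared norm), the quadratic term is at most $\E_{p_t}\bigl[(z_t-p_t)^\top\Gamma((x_t,p_t),(x_t,p_t))(z_t-p_t)\bigr]$. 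The whole bound thus follows if, at every round, the learner picks a distribution over $p_t$ satisfying
\begin{align*}
\E_{p_t}\bigl[g_t(p_t)^\top(z-p_t)\bigr]\le 0 \quad \text{for all } z\in\cZ,
\end{align*}
whatever $z_t$ Nature then reveals. Summing the increments and using $S_0=0$ then gives the claimed inequality.

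\textbf{Existence and efficiency (the main obstacle).} The condition above is exactly an expected variational inequality for the map $-g_t$ over the compact convex set $\cZ$. A deterministic VI solution need not exist, since $g_t$ may be discontinuous, and even when it exists it can be hard to compute. EVIs, however, always admit solutions, and \citet{zhang2025expected} give polynomial-time algorithms that find an $\eps$-approximate solution. These algorithms work via an ellipsoid-against-hope or regret-minimization scheme and only need oracle access to $g_t(p)$ at queried points $p$, with support size $\poly(d)$.

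Two points need care. First, the approximation error $\eps_t$ enters the sum additively. Choosing $\eps_t \approx 1/t^{2}$, or any summable sequence, keeps the total error $O(1)$; this can be absorbed into the bound or made exact by a limiting argument. The cost of this precision is only $\log t$ factors, since the EVI algorithms depend on $\log(1/\eps)$. Second, evaluating $g_t$ naively costs $O(t)$ kernel evaluations per query. To reach $\poly(d)\log t$ per round, I would exploit structure, for instance finite-dimensional features (so that $S_t$ is maintained as a vector) or a sketch. I expect this to be where the informal claim hides assumptions, and I would state the per-round cost as $\poly(d)\cdot(\text{oracle cost})\cdot\log t$.
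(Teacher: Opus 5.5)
Your proposal is correct and follows the paper's own argument (\cref{alg:df} and \cref{prop:multicalibration error}): expand the squared RKHS norm of the accumulated residual, eliminate the cross term by choosing $\cD_t$ to solve the EVI for $S_t$ (your $g_t$), and bound the quadratic term. Your Jensen step is exactly what turns the paper's crude $D^2G$ per-round bound into the $\E_{p_t}(z_t-p_t)^\top\Gamma((x_t,p_t),(x_t,p_t))(z_t-p_t)$ form in the statement, and your caveat that $\poly(d)\log t$ per round requires a finite feature map with the running sum stored as a vector matches Appendix~\ref{sec:mc ellipsoid}, where the general case costs $\tilde{\cO}(t\,\poly(d))$ per round. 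The only loose remark is ``made exact by a limiting argument'': an efficient solver only yields an additive $2\sum_t\epsilon_t$ slack (the paper likewise carries a $\sum_t\epsilon_t$ term), so with summable $\epsilon_t$ you should state the bound with an $O(1)$ term under the square root rather than claim exactness.
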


As a final note, prior work \citep{gopalan2022loss,kernelOI,okoroafor2025near}  has established that models that are OI are also loss minimizing. OI in fact implies loss minimization not just for a single loss, but rather for many losses simultaneously (that is, omniprediction) \citep{omnipredictors}. By efficiently guaranteeing OI in higher-dimensions, we hope to enable future work on faster algorithms for omniprediction in richer domains. 

\subsection{Example Applications}\label{sec:examples}

We provide in this section a few different examples of generative modeling problems and the associated guarantees of the Defensive Generation algorithm. 

Rather than highlighting the strongest possible guarantees, we focus on highlighting concrete examples and the intuition behind the indistinguishability definitions. Some of these example instantiations may be of independent interest.

\paragraph{Token-Based Sequence Modeling.} To start, we consider token-based generation of sequences, such as language modeling. In this setting, the goal is given the history of tokens $x_t$ produce a conditional distribution $\mu_t$ over the next token $y_t\in \cY=[d]$ where $\cY$ is a vocabulary of $d$ tokens. 

In more detail, at each round $t=1,2,\dots$, we let $x_t$ be the entire history of tokens observed so far $x_t = (y_1, y_2, \dots, y_{t-1})$. Having seen $x_t$, the learner produces a distribution $\mu_t \in \Delta([d]) \coloneqq \{\mu \in \mathbb{R}_{\ge 0}^d: 1^\top \mu = 1\}$. That is, a
a generative model of the next token given the history,
$\mathrm{Pr}_{\mu_t}[y_t = \cdot \mid x_t]$.
Having produced $\mu_t$, the next token
$y_t \in [d]$ is revealed.

In this setting, we can produce, as a simple example, distributions $\mu_t$ that are unfalsifiable with respect to tests that linear functions of some feature embedding $\Phi(x_t)$ of the history, such as a frozen Transformer or BERT representation of the context. We define
\[
k(x,x') = \langle \Phi(x), \Phi(x') \rangle,
\qquad
\Gamma(x,x') = I_d\, k(x,x'),
\qquad
G = \sup_t k(x_t, x_t) = \sup_t \|\Phi(x_t)\|_2^2.
\]
Associated distinguishers take the form
\[
f_h(x,p,y) = h(x)^\top \vec{y}, \text{ where } h(x) = W^\top \Phi(x) \in \mathbb{R}^d \text{ and } \vec{y}=(1\{y=1\}, \dots, 1\{y=d\})
\]
with norm $\|h\|_{\cH} = \|W\|_F$. In this case, Defensive Generation guarantees that for any such $h$,
\[
\frac{1}{T}\cdot \mathrm{OIGap}_T(f_h)
=
\left|
\frac{1}{T}\sum_{t=1}^T 
\mathbb{E}\big[h(x_t)^\top (\vec{y}_t - \mu_t)\big]
\right| \le
4 \|h\|_{\cH} \sqrt{\frac{G}{T}}.
\]

This means, no matter how the tokens $y_t$ are generated, every linear function of the embedded history $\Phi(x_t)$ looks the same when evaluated $1)$ over the  empirical distribution of tokens $\sum_{t=1}^T \vec{y}_t$ or $2)$ the conditional distributions $\sum_{t=1}^t\mu_t$. We note that in this case the distinguishers $f$ are only a function of $x_t$ and the outcomes $y_t$. They ignore the side-information $p_t$, which in this case is exactly equal to the full conditional distribution $\mu_t$. Also note that the generation error bound only depends on the norms of the functions and not explicitly on the ambient dimension.

The guarantees presented here follow directly from \Cref{corr:multiclass}.

\paragraph{Predicting Correlated Rain Across the United States.}
Next, we illustrate how Defensive Generation can be used to produce conditional distributions over high-dimensional, real-valued outcomes. In particular, consider the problem of predicting how much it will rain in all 50 US states every day. Here, $y_t \in \R^{50}$ is a vector where  $y_{t,i} \in \R$ denotes the amount rainfall in state $i$ on day $t$. We normalize precipitation units so that $\|y_t\|_2 \leq 1$.

Let $x_t \in \R^{n}$ be a vector of features (e.g. atmospheric measurements, seasonality information, prior rainfall, etc.) with $\ell_2$ norm uniformly bounded by $B$. At every time $t$, the algorithm takes in $x_t$ and outputs a joint distribution $\mu_t$ over vectors in $\R^{50}$, that is rain in all 50 states.\footnote{As per \Cref{alg:moment}, this is an atomic measure supported on 51 points.} It also outputs an auxiliary statistics $p_t = (v_t,Q_t)$ where $\E_{\yt_t \sim \mu_t}[\yt_t] =v_t$ and $\E_{\yt_t \sim \mu_t}[\yt_t \yt_t^\top] =Q_t$ describing the mean and covariance of the conditional distributions of rain.

If we use the linear kernel, the algorithm guarantees (as a special case) indistinguishability with respect to all functions of the form
\[
f_{\theta}(x_t,p_t,y_t) = y_{t,i} \cdot \theta^\top x_t, \quad f_{\beta}(x_t,p_t,y_t) = y_{t,i} \cdot y_{t,j} \cdot \beta^\top x_t,\quad f_{w}(x_t,p_t,y_t) = v_t^\top w \cdot y_{t,i},
\]
where $w,\beta, \theta$ are vectors in $\R^d$, $p_t = (v_t,Q_t) = (\E_{\mu_t}[\yt_t], \E_{\mu_t}[\yt_t \yt_t^\top])$, and $i,j$ are arbitrary state indices in $\{1,\dots,50\}$. Expanding the definition of indistinguishability and plugging in the guarantees for Defensive Generation we get the following statements:
\begin{align*}
    \frac{1}{T}\cdot \oigap(f_{\theta}) &= \bigg|\frac{1}{T} \sum_{t=1}^T \theta^\top x_t (y_{t,i} -\E_{\mu_t}[\yt_{t,i}]) \bigg| \leq 4\|\theta\|_2\sqrt{\frac{(B^2 + 2)}{T}} \tag{correct mean per state},\\ 
  \frac{1}{T}\cdot \oigap(f_{\beta}) &= \bigg|\sum_{t=1}^T  \beta^\top x_t (y_{t,i}\cdot y_{t,j} - \E_{\mu_t}[\yt_{t,i}\cdot \yt_{t,j}])\bigg| \leq 4\|\beta\|_2\sqrt{\frac{(B^2 + 2)}{T}} \tag{correct covariances}, \\
  \frac{1}{T} \cdot \oigap(f_{w}) &= \bigg|\frac{1}{T}\sum_{t=1}^T  w^\top \E_{\mu_t}[\yt_t]( y_{t,i} - \E_{\mu_t}[\yt_{t,i}]) \bigg|\leq 4\|w\|_2\sqrt{\frac{(B^2 + 2)}{T}} \tag{self-consistent means}.
\end{align*}
Unpacking this a bit further, from the first set of conditions, we get that for every state $i$ the expected value of rain is uncorrelated with any linear function of the features $x$. That is, the generative model of rain has means that are conditionally correct as per this test. 

The second equation shows that not only are the means per state correct, the conditional distribution also captures the pairwise correlations between any pair of states. If a storm hits Massachusetts, it likely also hits Rhode Island. This second set of tests guarantees that the joint distribution of rain passes all these pairwise checks. It also asserts that the variances of the per state rainfall distributions are correct ($i,j$ can be the same).

The last set of conditions shows that the means are not only conditionally correct, they are self-consistent in the sense that the errors in the expected value of rain in state $i$, $\E_{\mu_t}[\yt_{t,i}] - y_{t,i}$, are uncorrelated with any linear function of the means of the distributions themselves. This is a strengthening of the indistinguishability guarantee and is not implied by either of the first two conditions.\footnote{The covariances in this example are also self-consistent, we omit the equation for the sake of concision.} See \cite{foster2006calibration} for further discussion of this point.

The guarantees presented in this example follow from \Cref{corr:Rd} (see also Appendix~\ref{app:examples}). We note how this example highlights the distinction between the conditional distributions $\mu_t$ produced by the algorithm and the statistics $p_t$ consumed by the distinguisher. While Defensive Generation produces a probability measure over points on the unit ball, the distinguishers only examine the first and second moments of the distribution. 

\paragraph{Learning Linear Dynamical Systems.} As a final example, we consider the task of finding a generative model that is indistinguishable with respect to data generated by a linear dynamical system. In particular,  consider the system
\[
z_{t+1} = A z_t + \zeta_t, 
\qquad
y_t = C z_t + \nu_t.
\]
Here, $y_t \in \R^d$ is the observation, $z_t$ is the hidden state, $A$ and $C$ are matrices, and $\zeta_t$ and $\nu_t$ are noise vectors which could be adversarial (not i.i.d). 

Assume that observations $y_t$ are uniformly bounded, $\|y_t\|\leq B$. This is true whenever the initial hidden state $x_0$ is bounded, the noise terms $(\nu_t, \zeta_t)$ are bounded, and the matrix $A$ has spectral radius strictly less than 1 ($A$ is strictly stable). To keep notation simple, we set $B=1$ (assuming strict stability one can always renormalize). Now fix a history length $\ell \ge 1$ and set the features $x_t$ in the online protocol to be the history of observations $y_j$ up to some lag $\ell$:
\[
x_t := ( y_{t-1}, y_{t-2}, \dots,  y_{t-\ell}) \in \mathbb{R}^{d\ell}.
\]
Note that $
\|x_t\|_2^2 \le \ell.$ Given $x_t$, the Defensive Generation algorithm produces a probability measure $\mu_t$ over points in $\R^d$ along with statistics $p_t =(\E_{\mu_t}[\yt_t], \E_{\mu_t}[\yt_t \yt_t^\top])$.

Again using the linear kernel $\Gamma((x,p),(x',p'))= I \cdot k(x,x')$, by \Cref{corr:Rd},the Defensive Generation algorithm guarantees indistinguishability with respect to all distinguishers of the form
\[
f(x_t,p_t,y_t)
=
\sum_{i=1}^d  x_t^\top \alpha_i \cdot y_{t,i}
+
\sum_{1 \le i \le j \le d} 
 x_t^\top \beta_{ij}  \cdot  (y_{t,i} y_{t,j}),
\]
where $\alpha_i$ and $\beta_{i,j}$ are all vectors in $\R^d$. In particular, for any such function $f$,\footnote{Compared to the distinguishers in the previous example, which looked at each mean $y_{t,i}$ individually, each $f$ in this example has a much larger scale, since it looks at all of the terms simultaneously. This is reflected by the term in parentheses in \eqref{eq:lds}, which increases when the number of dimensions $d$ increases.} 
\begin{align}\label{eq:lds}
    \oigap(f) \leq \left( \sum_{i=1}^d \|\alpha_i\|_2 + \sum_{1\leq i\leq j \leq d} \|\beta_{i,j}\|_2 \right)\sqrt{4T \ell}.
\end{align}
This in particular means that the Defensive Generation algorithm produces probability distributions $\mu_t$ that ``look like'' they generated the observations $y_t$, at least from the perspective of any linear function of the truncated history of observations,
\begin{align*}
\frac{1}{T}\! \sum_{t=1}^T \!\Bigg[  \sum_{i=1}^d  x_t^\top \alpha_i \cdot y_{t,i}
+
\sum_{\mathclap{1 \le i \le j \le d}}
 x_t^\top \beta_{ij}  \cdot  (y_{t,i} y_{t,j}) \Bigg]
\approx
\frac{1}{T}\! \sum_{t=1}^T \!\Bigg[\sum_{i=1}^d  x_t^\top \alpha_i \cdot \E_{\mu_t}[\yt_{t,i}]
+
\sum_{\mathclap{1 \le i \le j \le d}}
 x_t^\top \beta_{ij}  \cdot \E_{\mu_t} [\yt_{t,i} \yt_{t,j}]\Bigg]\!.
\end{align*}

This indistinguishability guarantee of the probability measures $\mu_t$ holds for any truncation length $\ell$ and with no stochasticity assumptions on the noise. Furthermore, the hidden state $z_t$ can be infinite-dimensional and the pair of matrices $(A,C)$ need not satisfy any observability conditions. We only required that the observations $y_t$ had bounded norm. 

Moreover, unlike other approaches to learning dynamical systems that only produce point estimates $\E_{\mu_t}[\yt_t]$ for $y_t$, our algorithm produces a full probability measure with calibrated estimates of what the covariances $\E_{\mu_t}[\yt \yt^\top]$ will be. 

Lastly, we note that our procedure does not recover the matrices $(A,C)$. It only learns to produce distributions that are indistinguishable with respect to the true data generated by the system. Using  arguments from the literature (e.g., \cite{simchowitz2021statistical}), it is plausible one can show that by setting $\ell$ in the order of $\cO(\log(T))$, the means of the distributions $\mu_t$ will be as good as those produced by a Kalman filter. However, we defer a detailed investigation to future work.

\paragraph{Generative Models of Educational Oucomes.}

As a final example, we mention how the techniques we develop in this paper can provide fine-grained indistiguishability guarantees for generative models of social outcomes, like student test scores. Our algorithm can generate conditional distributions over scalar-valued outcomes that are valid even after conditioning on rich subsets of the features and examining higher order moments. 

To illustrate this, assume that at every time step $t$ we see students with Boolean features $x_t \in \cX = \{\pm 1\}^n$ and we want to output a probability distribution $\mu_t$ over scalar outcomes $y_t \in [0,1]$ describing their test scores in an exam. Using the polynomial kernel, we can guarantee that, not just the mean, but any fixed number of moments of these distributions will be accurate over all subpopulations $c(x) \subseteq \{\pm 1\}^n$ computable by depth-$r$ decision trees.
\begin{align*}
     \frac{1}{T}\sum_{t: c(x_t)=1}^T y_t^j  \approx   \frac{1}{T}\sum_{t: c(x_t)=1}^T \E_{\mu_t} [\yt_t^j]  \quad \text{ for all } j=1,\dots 2d.
\end{align*}
Here, $c(x)$ is any Boolean function computable by a decision tree of depth $r$. The Defensive Generation algorithm guarantees that if we look at the subset of points in $\{\pm1\}^n$ where $c(x)=1$, the empirical moments of students tests scores $y_t$ will match those of the conditional distributions $\mu_t$. This guarantee holds not just for any fixed tree $c$ or moment $j$, but rather for all trees and moments $j \in [d]$ \emph{simultaneously}.


In this case, we let each distinguisher have access to the vector of moments of the distribution $\mu_t$ produced by the algorithm, $p_t = (1, \E_{\mu_t}[y_t], \dots, \E_{\mu_t}[y_t^{2d}])$. Defensive Generation is able to produce these distributions efficiently, as we show in more technical terms in \Cref{example:scalar}.

\subsection{Related Work}\label{sec:preliminaries}

The notion of outcome indistinguishability was first defined in the batch case and for binary outcomes by \citet{dwork2021outcome}. They proved that OI was equivalent to the influential notion of multicalibration \citep{hebert2018multicalibration} if the distinguishers $f$ cannot examine the underlying computational circuits that produce the samples. Using ideas on \emph{moment} multicalibration from \cite{jung2021moment}, these equivalences between OI and multicalibration (for the batch setting) were extended to the case of non-Bernoulli outcomes by \cite{beyondbernoulli}.

Following (and prior to) work on OI and multicalibration in the batch setting, there's been significant interest in extending the analyses to the online setting, with the vast majority of work focused on the case of predicting (equivalently, generating) a binary outcome. Some of this work goes back to \cite{foster1998asymptotic,sandroni2003calibration,vovk2007k29,foster2006calibration}. Following \cite{hebert2018multicalibration} there was renewed interest in the problem \citep{gupta2021OnlineML,okoroafor2025near}. Our work builds on that of \cite{kernelOI} who focused on outcome indistinguishable models of binary outcomes with respect to scalar valued RKHSs.

Our work is also closely related to work designing algorithms for high-dimensional multicalibration. In this vein, \cite{noarov2025high} design algorithms which achieve $\sqrt{T\log(|\cF|)}$ regret for this problem but require enumerating over the functions in a finite set $\cF$. Our work also builds on the idea of guaranteeing indistinguishability for richer outcome spaces and non-linear distinguishers by converting OI to a multicalibration problem in a higher-dimensional space where non-linear functions become linear in an expanded basis \citep{lu2025sample,gopalan2022loss,gupta2021OnlineML}. We build on these results to design computationally efficient algorithms that can cope with distinguisher classes $\cF$ that are infinitely large. 

Lastly, our technical approach relies and expands on the philosophy of \emph{defensive forecasting}, a methodology introduced by \cite{vovk2005defensive} for online prediction where forecasts are derived by correcting past mistakes. \cite{perdomo2025defense} provide an overview of this line of work and show how it can be used to design algorithms for online calibration, quantile regression, and loss minimization (amongst others). Our core results essentially extend the meta-algorithm discussed in \cite{perdomo2025defense} to work in high-dimensional settings. 

We defer a discussion of expected variational inequalities to \Cref{sec:calibration}.

\section{Technical Overview}
\label{sec:techniques}

Our work builds on layers of ideas from different research areas. In this section, we provide a conceptual overview of how these relate and lead to our final result.

Recall that the goal is, given features $x_t$, produce a (conditional) distribution $\mu_t$ over outcomes in a set $\cY$ that withstands falsification with respect to the true revealed outcome $y_t$ for that $x_t$. That is, $f(x_t, p_t, y_t) \approx \E_{\yt_t \sim \mu_t}[f(x_t,p_t, \yt_t)].$
This goal is well-defined even if the distinguishers only examine the features and the sampled outcome $f(x_t,p_t,y_t) =f(x_t,y_t)$. Allowing $f$ access to the ``side information'' in $p_t$ only strengthens the indistinguishability guarantee.\footnote{Using the terminology from \cite{dwork2021outcome}, we operate within the sample-access formulation of OI.}

For certain classes of outcomes $y_t$, we can guarantee indistinguishability even if we provide distinguishers with a full description of the conditional distribution $\mu_t$ from which we sample $\yt_t$. That is, we let $p_t= \mu_t$. In the binary case, this conditional distribution is just a single number $\Pr_{\mu_t}[\yt_t=1]$. And in the multiclass case where $\cY=[d]$, this is a point on the simplex $\mu_{t,j} = \Pr_\mu[\yt_t = j]$ for $ j \in [d]$. However, for continuous outcomes, one might in principle require infinitely many parameters to specify the full conditional distribution over $\yt \in [-1,1]$. 

To address this computational issue, when dealing with real-valued outcomes $y$, we restrict ourselves to guaranteeing \emph{oblivious} outcome indistinguishability as defined in \cite{beyondbernoulli}. Here, we restrict the class of distinguishers $f$ to those that can be written as a function of a finite-dimensional vector of statistics $s(y)$ living in set $\cZ \subset \R^d$, $$f(x,p,y) = g_f(s(y), x, p).$$
For instance, if the distinguisher $f$ only examines the first two moments of the distribution then we can write $f(x,p,y) = g_f(y, y^2, x,p)$ where $s(y) = (y, y^2)$ is a vector of sufficient statistics. 

In many important cases, these distinguishers are in fact linear in the sufficient statistics. That is, for every $f$, there exists a function $h_f(x,p): \cX \times \cZ \rightarrow \R^d$ such that $f(x,p,y)$  is equal to $ h_f(x,p)^\top s(y)$ for all $(x,p)$. 
This ``linearization'' is always true for the case of discrete outcomes since the conditional distribution is a sufficient statistic for any $f$. If we can write $f(x,p,y) =  h(x,p)^\top s(y)$, then outcome indistinguishability reduces to  online high-dimensional, multicalibration with respect to the collection of functions $\cH$ that depend on $\cF$. More precisely, 
\begin{align*}
  \oigap_T & =\sup_{f \in \cF} \bigg|    \sum_{t=1}^T \E_{p_t}[f(x_t, p_t, y_t)] - \sum_{t=1}^T \E_{p_t, \yt_t \sim \mu_t} [f(x_t, p_t, \yt_t)] \bigg| \\
  &= \sup_{h\in \cH}  \bigg| \sum_{t=1}^T    \E_{p_t}[h(x_t,p_t)^\top s(y_t)]  - \E_{p_t,\mu_t}[ h(x_t,p_t)^\top s(\yt_t)]  \bigg| =  \sup_{h \in \cH}  \bigg| \sum_{t=1}^T  \E_{p_t}[h(x_t,p_t)^\top (z_t - p_t)]\bigg|,
\end{align*}
where in the last equality we let $z_t = s(y_t)$ and set $p_t = \E_{\mu_t}[s(\yt_t)]$. 

Therefore, as long as we can solve for a distribution $\mu_t$ over $\cY$ such that $\E_{\mu_t}[s(\yt_t)] = p_t$, we can reduce online outcome indistinguishability to high-dimensional multicalibration. In particular, to produce the conditional distribution $\mu_t$, given $x_t$ we first produce a high-dimensional multicalibrated forecast $p_t$ such that  $p_t \approx z_t = s(y_t)$, and then solve for $\mu_t$.\footnote{Solving for $\mu_t$ is trivial in the discrete case where $p_t=\mu_t$. It is less obvious for the scalar case where $p_t = (\E_{\mu_t} [\yt], \dots, \E_{\mu_t} [\yt^{\; d}])$, but we show it can still be done by leveraging some results from semi-algebraic optimization.}

Having established this reduction, our central algorithmic contribution is a procedure that efficiently guarantees online high-dimensional multicalibration with respect to any vector-valued reproducing kernel Hilbert space $\cH$. These are rich spaces that can express functions such as all polynomials and which can be learned efficiently. We provide a brief primer for those unfamiliar.

A vector-valued RKHS is a Hilbert space of functions $\cH \subseteq \{\cX \times \cZ \rightarrow \R^d\}$ that is equal to the closure of the set $$h(x,p) = \sum_{i=1}^n \Gamma((x_i,p_i), (x,p)) \theta_i.$$
Here, $\Gamma((x,p),(x',p')) \in \R^{d \times d}$ is a matrix-valued kernel, $\theta_i$ are vectors, and $(x_i,p_i)$ are elements in $\cX \times \cZ$. In particular, $\Gamma$ is a symmetric function whose outputs are positive semidefinite matrices and which satisfies the reproducing property. For any $h \in \cH$ and $\theta \in \R^d$, 
    $$h(x,p)^\top \theta = \langle h, \Phi(x,p) \theta \rangle_{\cH}$$
where $\Phi(x,p) = \Gamma(\cdot, (x,p))$ is the evaluation functional for the RKHS. By virtue of being a Hilbert space, an RKHS has a unique inner product $\langle \cdot, \cdot \rangle_{\cH}$ and induced norm $\|h\|_{\cH} = \sqrt{\langle h, h \rangle_{\cH}}$ that serves an instance-dependent notion of complexity.

A simple case to keep in mind is when $\Phi(x,p) \in \R^{r\times d}$ is a finite-dimensional feature map and $\Gamma((x,p),(x',p')) = \Phi(x,p)^\top \Phi(x',p')$. In this case the RKHS contains the set of functions $h(x,p) = \Phi(x,p)^\top \theta$ for $\theta \in \R^r$, the inner product is equal to the standard inner product in $\R^r$ $\langle \theta, \theta' \rangle_{\cH} = \sum_{i=1}^r \theta_i \theta_i'$, and the norm of the functions $h\in \cH$ is equal to $\|h\|_{\cH} = \|h\|_2$.

To guarantee indistinguishability with respect to vector-valued RKHS, we make use of ideas from \emph{defensive forecasting}, an algorithmic template introduced by \cite{vovk2005defensive}. Defensive forecasting is a game-theoretic strategy for prediction where forecasts are derived not by prognostication, but rather by correcting past mistakes. Our main advancement here is to draw on recent breakthroughs on algorithms for solving \emph{expected variational inequalities} (EVI) to be able to efficiently do defensive forecasting in high dimensions \citep{zhang2025expected}. 

Expected variational inequalities have appeared in different areas with different names, including ``outgoing minimax problems'' \citep{foster2021forecast} or ``accuracy certificates'' \citep{nemirovski2010accuracy}. Given a desired tolerance $\epsilon > 0$, a convex, compact set $\cZ \subset \R^d$, and an operator $S : \cZ \to \mathbb{R}^d$, the goal of an EVI is to find a (finitely supported) distribution $\cD \in \Delta(\cZ)$ such that
$$
    \E_{p \sim \cD}[S(p)^\top (z - p)] \le 0
$$
for all $z \in \cZ$.
As we discuss in \Cref{sec:calibration}, these problems always admit a $\cO(\mathrm{poly}(d, 1/\epsilon))$ time algorithm via a rather clean reduction to regret minimization.\footnote{For simplicity, throughout the discussion we suppress polynomial dependence on the diameters of $\cZ$ and $S$.} While \citet{nemirovski2010accuracy} proposed algorithms with $\log(1/\epsilon)$ complexity for the limited case in which $S(\cdot)$ is a monotone operator, it was not until recently that these problems (and some generalizations) were shown to be solvable in $\cO(\mathrm{poly}(d) \log(1/\epsilon))$ time for general operators $S(\cdot)$ \citep{zhang2025expected}. This is in stark contrast with traditional variational inequality problems (``find $p \in \cZ$ such that $S(p)^\top (z - p) \le 0$ for all $z \in \cZ$''), which are computationally intractable already when $S$ is a linear function and $\cZ$ is the product of two simplices in light of standard connections with the problem of computing Nash equilibria in bimatrix games \citep{chen2009settling,daskalakis2009complexity}.

\section{Efficient High-Dimensional Multicalibration}
\label{sec:calibration}

In this section, we present an efficient algorithm for high-dimensional multicalibration with respect to general vector-valued reproducing kernel Hilbert spaces.
Throughout the section, we assume that $\cZ \subset \R^d$ is a given convex body for which an efficient (i.e., responding to queries in time polynomial in $d$ and logarithmic in the desired precision) projection, linear optimization, or separation oracle is available. These oracles are all known to be efficiently reducible to each other in time polynomial in the dimension $d$ under minimal regularity assumptions that we assume are satisfied \citep{Grotschel1993,lee2018efficient}. Furthermore, we will denote with 
\begin{equation}\label{eq:GD}
    D \coloneqq \max_{z,z' \in \cZ}\|z-z'\|_2 \quad \text{and} \quad G \coloneqq \sup \|\Gamma(\cdot,\cdot)\|_{\text{op}}
\end{equation}
the diameter of $\cZ$ and the maximum operator norm of the matrix kernel $\Gamma$, respectively. Finally, we assume that the matrix kernel $\Gamma$ can be evaluated in $\mathrm{poly}(d)$ time for any input, and likewise that the feature map $\Phi$ can be evaluated in $\mathrm{poly}(d,r)$ time for any input (if $r$ is finite).

Given $T$ samples, our algorithm guarantees order $\sqrt{T}$ regret in polynomial-time. At the heart of the construction, our algorithm relies on recent algorithmic advances for solving expected variational inequality problems efficiently  \citep{zhang2025expected}.
To appreciate why EVIs are connected to multicalibration, and what operators arise in these contexts, it is instructive to look at the general template of \emph{defensive forecasting}. 

Defensive forecasting was introduced by \cite{vovk2005defensive}, and can be thought of as a particular instantiation of the framework of Blackwell approachability. Related ideas have appeared throughout the literature, including \citet{foster1997calibrated,sandroni2003calibration,lehrer2001any,fudenberg1999easier,foster2021forecast}, and others.
We refer the reader to \cite{perdomo2025defense} for an overview of the defensive forecasting philosophy.
At all times $t$, defensive forecasting picks distributions $\cD_t$ so that the quantity
\begin{align*}
Z_T = \left\| \sum_{t=1}^T\E_{p_t \sim \cD_t} \Phi(x_t,p_t)(z_t-p_t) \right\|_\cH
\end{align*}
is guaranteed to be sublinear (as a function of $T$) no matter how Nature selects $z_t$ (after $\cD_t$ has been picked) and $x_t$ (before $\cD_t$ has been picked). This is sufficient for ensuring sublinear multicalibration error, since by the reproducing property:
\begin{align*}
    \left| \sum_{t=1}^T\E_{p_t \sim \cD_t} h(x_t,p_t)^\top (y-p_t) \right| 
    & = \left|  \langle h,  \sum_{t=1}^T\E_{p_t \sim \cD_t} \Phi(x_t,p_t)(z_t-p_t)\rangle_\cH \right| 
    \leq \|h\|_{\cH} Z_T. 
\end{align*}
Although, strictly speaking, $Z_T$ is \emph{not} a Blackwell approachability game due to the presence of the (potentially adversarial) context $x_t$ in the otherwise bilinear objective (as a function of $\cD_t$ and $z_t$), the same idea behind Blackwell's approachability algorithm yields a sublinear guarantee. In particular, observe the recursive expansion
\begin{align*}
    Z^2_t &= Z^2_{t-1} + \big\| \E_{p_t \sim \cD_t} \Phi(x_t,p_t)(z_t-p_t) \big\|_\cH ^2 \\
    &\hspace{4.0cm}+ 2\E_{p_t \sim \cD_t} \bigg[ (z_t-p_t)^\top \underbrace{\Phi(x_t,p_t)^\top \sum_{\tau=1}^{t-1} \E_{p_\tau \sim \cD_\tau} \Phi(x_\tau,p_\tau)(z_\tau-p_\tau)}_{\eqcolon~ S_t(p_t)}\bigg].
\end{align*}
It is then clear that, as long as $\cD_t$ is selected so that
\begin{equation}\label{eq:evi body}
    \E_{p_t \sim \cD_t} \Big[S_t(p_t)^\top (z-p_t)\Big] \le 0 \qquad \forall z \in \cZ,
\end{equation}
the third term can be dropped, yielding the bound
\[
    Z^2_T \le \sum_{t=1}^T \big\| \E_{p_t \sim \cD_T} \Phi(x_t,p_t)(z_t-p_t) \big\|_\cH ^2
    \quad \implies \quad
    Z_T \le \sqrt{\sum_{t=1}^T \big\| \E_{p_t \sim \cD_T} \Phi(x_t,p_t)(z_t-p_t) \big\|_\cH ^2}.
\]
This guarantees a multicalibration error growing at a $\sqrt{T}$ rate, with constants depending on $D$, $G$, and the norm of the functions in the RKHS.
We summarize the procedure in Algorithm~\ref{alg:df}, incorporating the possibility of error $\epsilon_t$ on the right of \eqref{eq:evi body}. From the above discussion, we immediately derive the following guarantee.

\begin{algorithm}[t!]
\caption{Efficient High-Dimensional Multicalibration via Expected VIs
}
\begin{algorithmic}[1]
\State Matrix-valued kernel function $\Gamma: (\cX \times \cZ)^2 \rightarrow \R^{d \times d}$
\State \textbf{For }{$t=1,2, \dots, T$:}
\State \quad See the features $x_t$ and define $S_t: \cZ \rightarrow \R^d$ as
    \begin{align*}
    \quad S_t(p) &= \Phi(x_t,p)^\top \sum_{\tau=1}^{t-1} \E_{p_\tau \sim \cD_\tau} \Phi(x_\tau,p_\tau)(z_\tau-p_\tau) = \!\sum_{\tau=1}^{t-1} \E_{p_\tau \sim \cD_\tau} \Gamma((x_t, p), (x_\tau,p_\tau))(z_\tau-p_\tau)
    \end{align*}
\State \quad Predict $p_t \sim \cD_t$, where the distribution $\cD_t \in \Delta(\cZ)$ satisfies the expected variational inequality\!\!\!\!\! 
    \begin{align}\tag{EVI}\label{eq:evi}
\E_{p \sim \cD_t} \Big[ S_t(p)^\top (z - p) \Big] \leq \epsilon_t \qquad \forall z \in \cZ
\end{align}
    ~\hspace{5mm}for some small error $\epsilon_t$; for example, $\epsilon_t = 1$ or $\epsilon_t = 1/\sqrt t$
\end{algorithmic}
\label{alg:df}
\end{algorithm}

\begin{proposition}\label{prop:multicalibration error}
Let $\Gamma$ be a matrix-valued kernel with associated RKHS $\cH \subseteq \{\cX \times \cZ \rightarrow \R^d\}$. If the predictions $p_t\sim \cD_t$ are made according to Algorithm \ref{alg:df}, then for all $T \geq 1$ and $h\in \cH$:
\begin{align*}
     \left| \sum_{t=1}^T\E_{p_t \sim \cD_t} h(x_t,p_t)^\top (z_t-p_t) \right| \leq \|h\|_{\cH}\sqrt{T D^2 G + \sum_{t=1}^T \epsilon_t}.
\end{align*}
Hence, as long as $\sum_{t=1}^T \epsilon_t = \cO(T)$, Algorithm~\ref{alg:df} guarantees order $\cO(\sqrt{T})$ multicalibration error.
\end{proposition}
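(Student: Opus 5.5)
The plan is to formalize the potential argument sketched just before Algorithm~\ref{alg:df}. Define the (deterministic, since it is built from expectations) element of $\cH$
\[
V_t \coloneqq \sum_{\tau=1}^{t} \E_{p_\tau \sim \cD_\tau} \Phi(x_\tau,p_\tau)(z_\tau-p_\tau), \qquad Z_t \coloneqq \|V_t\|_{\cH},
\]
with $V_0=0$. The argument has three steps: reduce the multicalibration error to $Z_T$, bound the one-step increment of $Z_t^2$, and then sum the increments.

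\textbf{Step 1: reduction to $Z_T$.} By linearity of expectation and the reproducing property $h(x,p)^\top\theta = \langle h, \Phi(x,p)\theta\rangle_\cH$ (applied pointwise and then integrated against $\cD_t$), we have
\[
\sum_t \E_{p_t \sim \cD_t} h(x_t,p_t)^\top(z_t-p_t) = \langle h, V_T\rangle_\cH .
\]
Cauchy--Schwarz then bounds its absolute value by $\|h\|_\cH Z_T$. So it suffices to show $Z_T^2 \le T D^2 G + \sum_t \epsilon_t$, up to the constant factor on the $\epsilon_t$ term discussed below.

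\textbf{Step 2: expanding the square.} Write $V_t = V_{t-1} + W_t$, where $W_t = \E_{p_t\sim\cD_t}\Phi(x_t,p_t)(z_t-p_t)$. Then
\[
Z_t^2 = Z_{t-1}^2 + \|W_t\|_\cH^2 + 2\langle V_{t-1}, W_t\rangle_\cH .
\]
For the cross term, the reproducing property gives $\langle V_{t-1}, \Phi(x_t,p)(z_t-p)\rangle_\cH = (z_t-p)^\top S_t(p)$. Here $S_t(p) = \Phi(x_t,p)^\top V_{t-1} = \sum_{\tau<t}\E\,\Gamma((x_t,p),(x_\tau,p_\tau))(z_\tau-p_\tau)$, which is exactly the operator in Algorithm~\ref{alg:df}. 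Hence the cross term equals $2\,\E_{p\sim\cD_t}[S_t(p)^\top(z_t-p)]$. Since $z_t\in\cZ$ is chosen after $\cD_t$ and \eqref{eq:evi} holds for all $z\in\cZ$, this term is at most $2\epsilon_t$.

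For the quadratic term, use Jensen's inequality (convexity of the squared norm) to get
\[
\|W_t\|_\cH^2 \le \E_{p\sim\cD_t}\|\Phi(x_t,p)(z_t-p)\|_\cH^2 = \E_{p\sim\cD_t}(z_t-p)^\top\Gamma((x_t,p),(x_t,p))(z_t-p),
\]
which is at most $G D^2$, because $z_t,p\in\cZ$ implies $\|z_t-p\|\le D$.

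\textbf{Step 3: summing, and the main obstacle.} Telescoping from $Z_0=0$ gives $Z_T^2 \le TD^2G + 2\sum_t\epsilon_t$. The factor $2$ is the one point of friction with the stated bound. I would handle it by reading the error in \eqref{eq:evi} as entering the expansion with its factor absorbed; equivalently, run the EVI at tolerance $\epsilon_t/2$, which the solver achieves at the same cost. Otherwise I would simply report the bound with $2\sum_t\epsilon_t$, which leaves the $\cO(\sqrt{T})$ conclusion unchanged whenever $\sum_t\epsilon_t=\cO(T)$.

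The other subtle point is Step 1's interchange of expectation with the RKHS inner product. This requires $V_t$ to be well defined as a Bochner integral. That is immediate when $\cD_t$ is finitely supported, as EVI solutions are, or in general from boundedness of $\Gamma$, since $\|\Phi(x,p)(z-p)\|_\cH\le \sqrt{G}D$.
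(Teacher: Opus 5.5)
Your proposal is correct and is essentially the paper's argument: the paper also bounds the error by $\|h\|_{\cH} Z_T$ via the reproducing property and Cauchy--Schwarz, expands $Z_t^2$ recursively, removes the cross term using the EVI, and bounds the quadratic term by $GD^2$. Your factor-of-$2$ observation is also right, since the paper's sketch silently drops it: the bound as literally stated requires solving the EVI to tolerance $\epsilon_t/2$ (otherwise it reads $TD^2G + 2\sum_t \epsilon_t$), and either way the $\cO(\sqrt{T})$ conclusion is unaffected.
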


\paragraph{Efficiently Solving the EVI.} 

The key (and effectively, only) step in \cref{alg:df} is constructing a distribution $\cD_t$ over $\cZ$ that solves the expected variational inequality problem~\eqref{eq:evi}. As mentioned in \cref{sec:preliminaries}, a solution of an EVI can always be computed efficiently for any general, even discontinuous, bounded VI operator (in our case, $S_t$), given oracle access---for example, a membership, linear optimization, or separation oracle---to the convex compact domain (in our case, $\cZ$). In the rest of this section, we sketch two known general approaches for solving EVIs, with runtimes scaling polynomially and polylogarithmically in the desired inverse precision $1/\epsilon_t$, respectively.

To start, a distribution $\cD_t$ such that \eqref{eq:evi} holds can be found in $\mathrm{poly}(d,D,G,1/\epsilon_t)$ time by no-regret algorithms (see also \citet{zhang2025expected,sigecom}). In particular, consider setting up any no-regret learning algorithm outputting at every time $k$ a point $p_k \in \cZ$, and receiving as utilities (losses) the functions $p \mapsto S_t(p_k)^\top p$.\footnote{We use $k$ to indicate the iteration of the no-regret algorithm, to avoid confusion with scalar kernel $k$ used in \Cref{sec:OI}. An EVI \eqref{eq:evi} must be solved at every time $t$ in Algorithm~\ref{alg:df}, and we are proposing an iterative algorithm indexed over iterations $c$ to solve such an EVI at that time $k$.} Then, expanding the definition of what it means to have no-regret over the course of an arbitrary number $K$ of iterations, one has
\[
    \frac{\mathrm{Regret}_K}{K} = \frac{1}{K} \sum_{k=1}^K S_t(p_k)^\top(z - p_k)  = o(1) \qquad \forall z \in \cZ,
\]
implying that the uniform distribution $\cD_t$ over the set of iterates $\{p_1, \dots, p_K\}$ produced by the no-regret algorithm converges to an arbitrarily good solution of the expected variational inequality as the number of iterations $K$ grows. 
By using known regret bounds for online projected gradient ascent, setting the learning rate appropriately, and accounting for the fact that the diameter of the image of $S_t$ is of order $tGD$, we obtain the following rate (see Appendix~\ref{sec:mc reg} for the derivation).
\begin{proposition}\label{prop:mc reg bnd}
    There exists a no-regret-based implementation of \cref{alg:df} that achieves \emph{average} multicalibration error
    $
        | \frac{1}{T} \sum_{t=1}^T \E_{p_t\sim\cD_t} h(x_t,p_t)^\top (z_t - p_t) | \le \epsilon
    $
    in time $\cO(\mathrm{poly}(d, D, G) \cdot \min \{r, \epsilon^{-6}\} \cdot \epsilon^{-6})$.
    In particular, the runtime is never worse than $\cO(\mathrm{poly}(d, D, G) \cdot \epsilon^{-12}).$
\end{proposition}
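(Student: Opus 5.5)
The plan is to combine \cref{prop:multicalibration error} with a per-round cost analysis of the no-regret EVI solver. First I fix the number of rounds $T$. To get average error at most $\epsilon$, I take $\epsilon_t \le 1$ (any constant works). \cref{prop:multicalibration error} then gives average error at most $\|h\|_\cH\sqrt{(D^2G+1)/T}$, so $T = \cO(\mathrm{poly}(D,G)\,\epsilon^{-2})$ rounds suffice, treating $\|h\|_\cH$ as a constant.

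Next I solve the EVI in round $t$ by running online projected gradient ascent over $\cZ$ with linear utilities $p\mapsto S_t(p_k)^\top p$. The gradients have norm at most $L_t := \|S_t\|_\infty \le (t-1)GD$, since each term $\Gamma(\cdot,\cdot)(z_\tau-p_\tau)$ has norm at most $GD$. The standard regret bound is $\mathrm{Regret}_K \le D L_t\sqrt{K}$, so the uniform distribution over the iterates solves \eqref{eq:evi} with error $D L_t/\sqrt K$. Requiring this to be at most $\epsilon_t = 1$ gives $K_t = \cO(D^4G^2 t^2)$ iterations. Each iteration costs one projection ($\mathrm{poly}(d)$) plus one evaluation of $S_t(p_k)$.

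The main obstacle is the cost of evaluating $S_t$, which is where the $\min\{r,\epsilon^{-6}\}$ factor comes from. There are two ways to do it.

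\textbf{Kernel form.} Write $S_t(p)=\sum_{\tau<t}\E_{p_\tau\sim\cD_\tau}\Gamma((x_t,p),(x_\tau,p_\tau))(z_\tau-p_\tau)$. Here $\cD_\tau$ is uniform over $K_\tau$ support points, so one evaluation needs $\sum_{\tau<t}K_\tau = \cO(\mathrm{poly}(D,G)\,t^3)$ kernel calls.

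\textbf{Feature form.} Maintain the vector $\sum_{\tau<t}\E\,\Phi(x_\tau,p_\tau)(z_\tau-p_\tau)\in\R^r$ incrementally. Then $S_t(p)=\Phi(x_t,p)^\top(\cdot)$ costs $\mathrm{poly}(d,r)$ per evaluation, with amortized update cost $\mathrm{poly}(d,r)K_\tau$.

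So an evaluation costs $\mathrm{poly}(d,D,G)\cdot\min\{r, t^3\}$, and with $t\le T=\cO(\epsilon^{-2})$ this is at most $\min\{r,\epsilon^{-6}\}$ up to $\mathrm{poly}(d,D,G)$ factors. (The $\mathrm{poly}(d,r)$ dependence should really read $\mathrm{poly}(d)\cdot r$. I will check that the feature map contributes linearly in $r$, which holds if $\Phi$ has $r$ rows each evaluable in $\mathrm{poly}(d)$ time.)

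Finally I sum over rounds. The total number of iterations is $\sum_{t\le T}K_t=\cO(\mathrm{poly}(D,G)\,T^3)=\cO(\mathrm{poly}(D,G)\,\epsilon^{-6})$. Multiplying by the per-iteration cost gives $\cO(\mathrm{poly}(d,D,G)\cdot\min\{r,\epsilon^{-6}\}\cdot\epsilon^{-6})$. Since $\min\{r,\epsilon^{-6}\}\le\epsilon^{-6}$, this is never worse than $\cO(\mathrm{poly}(d,D,G)\,\epsilon^{-12})$, as claimed.
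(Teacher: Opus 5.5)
Your proposal is correct and follows essentially the same argument as the paper: you fix $T=\cO(\mathrm{poly}(D,G)\,\epsilon^{-2})$ via \cref{prop:multicalibration error} and solve each EVI with projected gradient ascent using $\cO(\mathrm{poly}(D,G)\,t^2)$ iterations. The paper takes $K=t^2$ and tolerates EVI error $2D^2G$ rather than your $\epsilon_t=1$, which only moves $\mathrm{poly}(D,G)$ factors around, and, like you, it bounds the cost of evaluating $S_t$ either by $\cO(t^3\,\mathrm{poly}(d))$ kernel calls or by $\cO(r\,\mathrm{poly}(d))$ via the accumulated feature vector before taking the minimum. Your explicit accounting for the incremental update of that vector, and your remark that $\Phi$ must be evaluable in time linear in $r$, are details the paper uses only implicitly.
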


Significantly faster approaches solving EVIs (and even harder generalizations) at $\mathrm{log}(1/\epsilon)$ rates have been very recently developed by leveraging a new constructive version of the minimax theorem \citep{farina2024polynomial,zhang2025expected,sigecom}. These methods are based on the ellipsoid method, and can solve EVIs to error $\epsilon_t$ with only order $\log(1/\epsilon_t)$ evaluations of the operator $S_t$. We include a high-level, self-contained description of these more advanced algorithms in Appendix~\ref{sec:evi description}. Here, we only state the final result; a proof is deferred to Appendix~\ref{sec:mc ellipsoid}.

\begin{proposition}\label{prop:mc ellipsoid bnd}
    There exists an ellipsoid-based implementation of \cref{alg:df} that achieves \emph{average} multicalibration error
    $
        | \frac{1}{T} \sum_{t=1}^T \E_{p_t\sim\cD_t} h(x_t,p_t)^\top (z_t - p_t) | \le \epsilon
    $
    in time $\tilde\cO(\mathrm{poly}(d, D, G) \cdot \min \{r, \epsilon^{-2}\} \cdot \epsilon^{-2})$.
    In particular, the runtime is never worse than $\tilde\cO(\mathrm{poly}(d, D, G) \cdot \epsilon^{-4})$, and it is of order $\tilde\cO(\mathrm{poly}(d, D, G) \cdot \epsilon^{-2})$ if the number of features $r$ is polynomial in $d, D,$ and $G$.
\end{proposition}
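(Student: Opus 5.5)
The plan is to combine \cref{prop:multicalibration error} with a per-round cost analysis of the ellipsoid-based EVI solver from \citet{zhang2025expected} (described in Appendix~\ref{sec:evi description}). I will fix the EVI tolerance at $\epsilon_t = 1$ for all $t$. Then \cref{prop:multicalibration error} gives, for every $h\in\cH$,
\[
\Big|\tfrac{1}{T}\textstyle\sum_{t=1}^T \E_{p_t\sim\cD_t} h(x_t,p_t)^\top(z_t-p_t)\Big| \le \|h\|_{\cH}\sqrt{(D^2G+1)/T}.
\]
Running for $T = \lceil \|h\|_{\cH}^2 (D^2G+1)/\epsilon^2\rceil = \cO(\mathrm{poly}(D,G)\cdot \epsilon^{-2})$ rounds therefore makes the average error at most $\epsilon$. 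As in the statement, I treat $\|h\|_{\cH}$ as a constant or absorb it into the poly factor. What remains is to bound the cost of each round $t\le T$.

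\textbf{Cost of one EVI call.} I will invoke the ellipsoid-based EVI result as a black box. Given oracle access to $\cZ$ and to a bounded operator $S$ with $\sup_p\|S(p)\|_2\le B$, it returns a distribution $\cD$ that solves \eqref{eq:evi} to error $\epsilon_t$. This uses $\mathrm{poly}(d)\cdot\log(BD/\epsilon_t)$ evaluations of $S$ and polynomial overhead, and the returned $\cD$ has support of size $\mathrm{poly}(d)\log(BD/\epsilon_t)$. For $S_t$ I need the bound $B$. Each summand $\E\,\Gamma((x_t,p),(x_\tau,p_\tau))(z_\tau-p_\tau)$ has norm at most $GD$, so $\|S_t(p)\|_2\le tGD$. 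Taking $B = tGD$ and $\epsilon_t = 1$, the number of operator evaluations and the support size are both $\mathrm{poly}(d)\log(tGD^2) = \tilde\cO(\mathrm{poly}(d,D,G))$, since $t\le T$ is polynomial in $1/\epsilon$.

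\textbf{Cost of one evaluation of $S_t$.} Here the two terms of the $\min\{r,\epsilon^{-2}\}$ arise from two implementations, and I take whichever is cheaper.
\begin{enumerate}
\item \emph{Kernel form.} Evaluate $S_t(p)=\sum_{\tau<t}\E_{p_\tau\sim\cD_\tau}\Gamma((x_t,p),(x_\tau,p_\tau))(z_\tau-p_\tau)$ directly. Each $\cD_\tau$ has $\tilde\cO(\mathrm{poly}(d))$ atoms, so this costs $\tilde\cO(\mathrm{poly}(d)\cdot t)\le \tilde\cO(\mathrm{poly}(d)\cdot T)$ kernel evaluations.
\item \emph{Feature form (finite $r$).} Maintain the running vector $v_t=\sum_{\tau<t}\E_{p_\tau\sim\cD_\tau}\Phi(x_\tau,p_\tau)(z_\tau-p_\tau)\in\R^r$. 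Updating it after round $\tau$ costs $\tilde\cO(\mathrm{poly}(d)\,r)$. Each evaluation $S_t(p)=\Phi(x_t,p)^\top v_t$ then costs $\mathrm{poly}(d)\,r$.
\end{enumerate}
Per round the cost is $\tilde\cO(\mathrm{poly}(d,D,G)\cdot\min\{r,T\})$. Summing over $T=\cO(\epsilon^{-2})$ rounds gives $\tilde\cO(\mathrm{poly}(d,D,G)\cdot\min\{r,\epsilon^{-2}\}\cdot\epsilon^{-2})$. The two special cases in the statement follow immediately: the worst case is $\epsilon^{-4}$, and if $r$ is polynomial the bound is $\tilde\cO(\mathrm{poly}(d,D,G)\cdot\epsilon^{-2})$.

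\textbf{Main obstacle.} The delicate step is the black-box EVI guarantee, specifically that the output distribution has \emph{small support}. This matters because the kernel-form cost of $S_t$ depends on the total number of atoms in all previous $\cD_\tau$. If the ellipsoid-based solver does not directly output a sparse mixture, I would first sparsify it. The EVI constraint is linear in $\cD$ over the $d$-dimensional family of test points $z$. Hence, by Carathéodory's theorem, or by solving the small LP over the at most $\mathrm{poly}(d)\log(\cdot)$ queried points that the ellipsoid certificate produces, one can reduce to $d+2$ atoms without increasing the error. I would also check that the logarithmic dependence on $B=tGD$ only contributes polylog factors absorbed in $\tilde\cO$.
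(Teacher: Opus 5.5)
Your proposal is correct and follows essentially the same route as the paper: a constant EVI tolerance plus \cref{prop:multicalibration error} gives $T=\cO(\mathrm{poly}(D,G)\,\epsilon^{-2})$ rounds. Each round then costs $\tilde\cO(\mathrm{poly}(d)\cdot\min\{r,t\})$, since the ellipsoid-based solver makes polylogarithmically many evaluations of $S_t$ and returns polylogarithmic support, with $S_t$ evaluated either in kernel form or through the amortized feature vector; your Carath\'eodory fallback is unnecessary because the ellipsoid solution is already supported on the polylogarithmically many queried centers, but it is valid, as the EVI constraint depends on $\cD$ only through $(\E_{\cD}[S(p)],\E_{\cD}[S(p)^\top p])\in\R^{d+1}$.
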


\section{Online Outcome Indistinguishable Generative Models}

\label{sec:OI}
\begin{algorithm}[t!]
\caption{The Defensive Generation Algorithm}
\begin{algorithmic}[1]
\State \textbf{Input:} matrix-valued kernel $\Gamma$, outcome and prediction sets $\cY$ and $\cZ$, function $s:\cY \rightarrow \R^d$
\State \textbf{For }{$t=1, \dots, T$}\textbf{:}
\State \quad Receive features $x_t$
\State \quad Produce predicted statistic $p_t \in \cZ$ via defensive forecasting (\Cref{alg:df}) to error $\eps_t$
\State \quad Solve for a  distribution $\mu_t$ such that $\E_{\yt_\sim \mu_t}[s(\yt_t)]=p_t$, output the measure $\mu_t$
\State \quad Record true outcome, $y_t \in \cY$ and statistic $s(y_t) = z_t$

\end{algorithmic}
\label{alg:OI}
\end{algorithm}

In this section, we illustrate how to use the high-dimensional multicalibration algorithm from \Cref{sec:calibration} to produce online outcome indistinguishability generative models for multiclass, scalar-valued, and vector-valued outcomes. We assume throughout that  $\eps_t = GD^2$ in \Cref{alg:OI}.

\paragraph{Algorithm Description.} As discussed in \Cref{sec:techniques}, the $\OIalg$ algorithm guarantees indistinguishability with respect to functions $f$ that can be written as $f(x,p,y)=h(x,p)^\top s(y)$ where $h$ belongs to vector-valued RKHS $\cH$. The procedure takes as input the outcome and prediction sets $\cY$ and $\cZ \subset \R^d$, as well as the function $s: \cY \rightarrow \cZ$ that together define the generative modeling task. It also takes as input the matrix-valued kernel function $\Gamma: (\cX \times \cZ)^2 \rightarrow \R^{d \times d}$ that implicitly defines the RKHS $\cH$.

$\OIalg$ is just a simple wrapper around the online multicalibration routine (\Cref{alg:df}). Given $x_t$, it appeals to the defensive forecasting procedure to find $p_t$ that is a multicalibrated forecast of $z_t = s(y_t)$. Then, it solves for a distribution $\mu_t$ such that $\E_{\mu_t}[s(\yt_t)] =p_t$. To specialize it to different outcome spaces $\cY$, we need to ensure that we can 1) implement efficient oracle access (e.g., separation) for the set $\cZ$ as discussed in \Cref{sec:calibration} and 2) be able to solve for $\mu_t$. The rest of this section shows how to do both of these things for different choices of $\cZ$ and $\cY$. 

Before discussing these adaptations, we state the end-to-end guarantee of the algorithm:
\begin{theorem}
\label{theorem:OI}
Fix an outcome space $\cY$, a function $s: \cY \rightarrow \cZ$, and let $\cF$ be a class of distinguishers $f$ that can be written as $f(x,p,y) = h_f(x,p)^\top s(y)$ for a function $h_f$ in vector-valued RKHS $\cH$. 

Given access to a sampling oracle that on input $z$ returns a probability measure $\mu$ such that $\E_{\yt \sim \mu}[s(\yt)] = z$, and a separation oracle for the set $\cZ$,\footnote{Given $z'$, a separation oracle for the set $\cZ$ returns true if $z'\in \cZ$. If $z'\notin \cZ$, it returns a hyperplane $(w,b)$ such that $w^\top z' > b$ but $w^\top z < b$ for all $z \in \cZ$ } the $\OIalg$ algorithm with $\epsilon_t = D^2 G$, where $D$ and $G$ are as in \eqref{eq:GD}, guarantees online outcome indistinguishability with respect to $\cF$. More precisely, given any $f \in \cF$, 
$$
    \oigap_T(f) \le \|h_f\|_{\cH} \sqrt{2 D^2 G T}.$$
\end{theorem}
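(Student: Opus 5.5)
The plan is to reduce the OI gap to the multicalibration error of the forecasts $p_t$ and then invoke \Cref{prop:multicalibration error} with the chosen tolerance $\epsilon_t = D^2 G$.

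\textbf{Step 1: rewrite the OI gap.} Fix $f \in \cF$ with $f(x,p,y) = h_f(x,p)^\top s(y)$. At each round, conditional on the sampled forecast $p_t$, the measure $\mu_t$ returned by the sampling oracle satisfies $\E_{\yt_t \sim \mu_t}[s(\yt_t)] = p_t$. Since $h_f(x_t,p_t)$ does not depend on $\yt_t$, linearity gives
\[
\E_{\yt_t\sim\mu_t}[f(x_t,p_t,\yt_t)] = h_f(x_t,p_t)^\top p_t .
\]
Taking expectation over $p_t \sim \cD_t$ and writing $z_t = s(y_t)$, we obtain
\[
\oigap_T(f) = \Big|\sum_{t=1}^T \E_{p_t\sim\cD_t} h_f(x_t,p_t)^\top (z_t - p_t)\Big| .
\]
This is exactly the multicalibration quantity controlled in \Cref{sec:calibration}. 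Two points need checking here. First, $p_t$ must lie in $\cZ$, so that the oracle applies; this holds because $\cD_t \in \Delta(\cZ)$. Second, $z_t$ must lie in $\cZ$; this holds because $s$ maps $\cY$ into $\cZ$, so Nature's targets are legal in the multicalibration protocol. The order of play matches that protocol: $x_t$ is revealed, then $\cD_t$ is chosen, then $y_t$ is revealed.

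\textbf{Step 2: apply the calibration bound.} By \Cref{prop:multicalibration error},
\[
\oigap_T(f) \le \|h_f\|_{\cH}\sqrt{T D^2 G + \sum_{t=1}^T \epsilon_t}.
\]
With $\epsilon_t = D^2 G$ for every $t$, the sum equals $T D^2 G$, so the right-hand side is $\|h_f\|_{\cH}\sqrt{2 D^2 G T}$, which is the claimed bound.

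\textbf{Step 3: feasibility of the EVI, the only nontrivial point.} We must confirm that Algorithm~\ref{alg:df} can actually produce $\cD_t$ meeting the EVI at tolerance $D^2G$ using only the separation oracle. For this I would cite the EVI solvers of \Cref{prop:mc reg bnd} and \Cref{prop:mc ellipsoid bnd}, which need only oracle access to $\cZ$ and evaluations of $S_t$. The remaining details are the conditional-expectation justification in Step 1 (the randomness of $p_t$ and $\yt_t$) and the membership $z_t \in \cZ$; both are routine.
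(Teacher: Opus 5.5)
Your proposal is correct and follows the paper's own route, namely the reduction sketched in \Cref{sec:techniques} followed by \Cref{prop:multicalibration error}: moment matching $\E_{\yt_t\sim\mu_t}[s(\yt_t)] = p_t$ turns $\oigap_T(f)$ into $\bigl|\sum_{t} \E_{p_t\sim\cD_t} h_f(x_t,p_t)^\top (z_t - p_t)\bigr|$ with $z_t = s(y_t) \in \cZ$, and plugging in $\epsilon_t = D^2 G$ then gives $\|h_f\|_{\cH}\sqrt{2D^2GT}$. One caveat comes from \Cref{prop:multicalibration error} rather than from your reduction: the cross term in the expansion of $Z_t^2$ is $2\,\E_{p_t\sim\cD_t}[S_t(p_t)^\top(z_t-p_t)] \le 2\epsilon_t$, so careful accounting yields $\|h_f\|_{\cH}\sqrt{TD^2G + 2\sum_t \epsilon_t} = \|h_f\|_{\cH}\sqrt{3D^2GT}$ for $\epsilon_t = D^2G$ (the stated constant needs $\epsilon_t = D^2G/2$), which does not affect the $\cO(\sqrt{T})$ conclusion.
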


\subsection{Generative Models for Multiclass Outcomes}
In this subsection, we specialize this main theorem to the case where $\cY$ consists of $d$ labels. In this setting, our results simplify substantially and we can guarantee indistinguishability with respect to distinguishers that examine the entire conditional distribution from which we are sampling. That is, we let $p = \mu$ where $\mu$ is a point on the simplex, $$\mu \in \cZ = \Delta^d = \{p: p_j \geq 0, \sum_{j=1}^d p_j =1\} \text{ and } \yt \sim \mu.$$

In more detail, if $\cY = \{1, \dots, d\}$, then for any function $f(x,p,y)$, we can write  
    $$f(x,p, y)= \sum_{j=1}^d f(x,p,j) \cdot 1\{y = j \}.$$
Therefore, any distinguisher $f$ can be expressed as a linear function of its values at $d$ points, $f(x,p,y) = h_f(x,p)^\top s(y)$ where  $h_f(x,p) = (f(x,p,1), \dots, f(x,p,d))$ and $s(y) = (1\{y=1\}, \dots, 1\{y=d\})\in \cZ$ is a one-hot encoded version of the outcomes $y$. 

Moreover, for this multiclass case, there is a simple formula we can use to construct function spaces that contain these functions $h_f$. In particular, assuming that there is a \emph{scalar}-valued RKHS that contains each of the scalar-valued functions $f(x,p,i)$ for $i\in[d]$, we can always construct a \emph{vector}-valued RKHS that contains the function $h_f(x,p)$ from above. 
\begin{fact}[\cite{alvarez2012kernels}]
\label{fact:rkhs}
Let $k((x,p),(x',p')) \in \R$ be a scalar-valued kernel with  RKHS $\cH_{\mathrm{scalar}} \subset \{\cX \times \cZ \rightarrow \R\} $. Then, $\Gamma((x,p),(x',p')) = I_d \cdot k((x,p),(x',p')) \in \R^{d \times d}$ is a matrix-valued kernel whose corresponding vector-valued RKHS $\cH_d$ consists of all functions of the form:
\begin{align*}
    h(x,p) = (h_1(x,p), \dots, h_d(x,p))^\top \text{ where } h_j(x,p) \in \cH_{\mathrm{scalar}} \text { for all } i \in [d].
\end{align*}
Furthermore, for any function $h \in \cH_{d}$ its RKHS norm is equal to $\|h\|_{\cH_d}^2 = \sum_{j=1}^d \|h_j\|_{\cH_{\mathrm{scalar}}}^2$.
\end{fact}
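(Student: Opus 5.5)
The plan is to verify the three claims of \Cref{fact:rkhs} directly from the definition of a vector-valued RKHS via its reproducing property. First I check that $\Gamma = I_d\, k$ is a valid matrix-valued kernel. Symmetry is immediate. For positive semidefiniteness, take points $u_1,\dots,u_n$ and vectors $c_1,\dots,c_n\in\R^d$. Then
\[
\sum_{a,b} c_a^\top \Gamma(u_a,u_b) c_b = \sum_{j=1}^d \sum_{a,b} c_{a,j} c_{b,j}\, k(u_a,u_b) \ge 0,
\]
because each inner sum is a quadratic form of the PSD scalar kernel $k$.

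Next I identify the RKHS explicitly. Set $\widetilde{\cH} = \cH_{\mathrm{scalar}}^{\oplus d}$, the space of tuples $h=(h_1,\dots,h_d)$ with $h_j \in \cH_{\mathrm{scalar}}$. Give it the inner product $\langle h,g\rangle = \sum_j \langle h_j,g_j\rangle_{\cH_{\mathrm{scalar}}}$. As a direct sum of Hilbert spaces, $\widetilde{\cH}$ is itself a Hilbert space.

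I then verify the reproducing property $h(u)^\top\theta = \langle h, \Gamma(\cdot,u)\theta\rangle$ for all $u\in\cX\times\cZ$ and $\theta\in\R^d$. The $j$-th component of $\Gamma(\cdot,u)\theta$ is $\theta_j\, k(\cdot,u)$, which lies in $\cH_{\mathrm{scalar}}$. Hence
\[
\langle h,\Gamma(\cdot,u)\theta\rangle = \sum_j \theta_j \langle h_j, k(\cdot,u)\rangle = \sum_j \theta_j h_j(u) = h(u)^\top\theta,
\]
using the scalar reproducing property in each coordinate.

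The main obstacle is the last step: concluding that $\widetilde{\cH}$ equals $\cH_d$, the closure of the span of $\{\Gamma(\cdot,u)\theta\}$ with its induced norm. I would handle this by uniqueness of the RKHS attached to a given kernel. Since $\widetilde{\cH}$ is a Hilbert space of functions that contains every $\Gamma(\cdot,u)\theta$ and satisfies the reproducing identity, it remains only to show density of these elements. Suppose $h\in\widetilde{\cH}$ is orthogonal to every $\Gamma(\cdot,u)\theta$. The reproducing identity then gives $h(u)^\top\theta = 0$ for all $u$ and $\theta$, so $h\equiv 0$.

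Density means the span of the $\Gamma(\cdot,u)\theta$ is dense in $\widetilde{\cH}$, so $\widetilde{\cH}$ is its closure. On that span the inner product is determined by $\Gamma$ through $\langle \Gamma(\cdot,u)\theta, \Gamma(\cdot,u')\theta'\rangle = \theta^\top\Gamma(u,u')\theta'$. Therefore $\widetilde{\cH}$ coincides with $\cH_d$, both as sets and in norm. In particular $\|h\|_{\cH_d}^2 = \sum_j \|h_j\|_{\cH_{\mathrm{scalar}}}^2$, and $\cH_d$ consists exactly of the coordinatewise functions described in \Cref{fact:rkhs}.
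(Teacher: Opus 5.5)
Your proof is correct. The paper gives no proof of this fact and only cites the survey of Alvarez, Rosasco and Lawrence. Your argument is the standard justification that the citation stands in for: positive semidefiniteness of $I_d\,k$, the reproducing identity on the direct sum $\cH_{\mathrm{scalar}}^{\oplus d}$, and density of the sections $\Gamma(\cdot,u)\theta$ via the orthogonal-complement argument. It therefore makes the paper self-contained on this point.

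The one step you compress is the passage from ``dense, with the same inner product on the span'' to ``equal as sets of functions.'' It is worth one sentence. For $h$ in either space, the reproducing identity and Cauchy--Schwarz give $|h(u)^\top\theta| \le \|h\|\,(\theta^\top\Gamma(u,u)\theta)^{1/2}$. Hence a sequence from the span that is Cauchy in the common norm has the same pointwise limit in $\widetilde{\cH}$ and in $\cH_d$. This identifies the two completions as the same functions with the same norm.
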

Here, there is no need to implement a sampling oracle, since the algorithm directly solves for the distribution $p=\mu$ as part of the multicalibration subroutine. Furthermore, it is simple to check whether any point is in the simplex $\cZ = \{p: p_i \geq 0, \sum_{i=1}^d p_i =1\}$, and one can easily implement a separation oracle. Tying these facts together with \Cref{theorem:OI}, we get the following corollary:\footnote{We use $\mu_t$ and $p_t$ interchangeably in \Cref{corr:multiclass} since they are the same for this multiclass setting.}

\begin{corollary}
\label{corr:multiclass}
Set $\cY = [d]$ and define $\cZ$ to be the simplex in $\R^d$, $\cZ =\{p: p_j \geq 0, \sum_{j=1}^d p_j =1\}$. 
Let $k$ be a kernel such that $\sup_{x,p}|k((x,p),(x,p))|\leq G$ with an associated scalar-valued RKHS $\cH_{\mathrm{scalar}}$, such that $f(x,p, j) = h_j(x,p) \in \cH_{\mathrm{scalar}}  \text{ for all } j \in [d].$

Then, Defensive Generation with kernel $\Gamma = I_d \cdot k$ produces distributions $\mu_t$ over outcomes $\cY$ that are online outcome indistinguishable with respect to the set of functions $f(x,\mu,y)$ where $h_j(x,\mu) = f(x,\mu,j)$ is in $\cH_{\mathrm{scalar}}$ for every $j \in [d]$. In particular,  for every such $f$, $$\oigap_T(f) \le 4 \left( \sum_{j=1}^d \|h_j\|_{\cH_{\mathrm{scalar}}} \right) \sqrt{TG}.$$
\end{corollary}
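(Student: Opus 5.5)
The plan is to specialize \Cref{theorem:OI} to the multiclass setting; it then remains to compute the constants $D$ and $G$ and to bound the vector-valued RKHS norm by the sum of the scalar norms.

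\textbf{Setup.} Take $s(y)=(1\{y=1\},\dots,1\{y=d\})$ and $\cZ=\Delta^d$. The sampling oracle is trivial: given $p\in\cZ$, output $\mu=p$. Then $\E_{\yt\sim\mu}[s(\yt)]=\mu=p$. A separation oracle for the simplex is immediate, since a violated constraint $p_j<0$ gives a hyperplane, and so does violation of $\sum_j p_j=1$ (treated as two inequalities).

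As noted in the text, every distinguisher can be written as $f(x,p,y)=h_f(x,p)^\top s(y)$ with $h_f=(h_1,\dots,h_d)$ and $h_j=f(\cdot,\cdot,j)\in\cH_{\mathrm{scalar}}$. By \Cref{fact:rkhs}, $h_f$ lies in the RKHS $\cH_d$ of $\Gamma=I_d\cdot k$, and
\[
\|h_f\|_{\cH_d}=\Big(\sum_j\|h_j\|_{\cH_{\mathrm{scalar}}}^2\Big)^{1/2}\le \sum_j\|h_j\|_{\cH_{\mathrm{scalar}}}.
\]

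\textbf{Constants.} The diameter of the simplex in $\ell_2$ is $\sqrt2$, attained between two vertices, so $D^2=2$. For the kernel, $\|\Gamma((x,p),(x',p'))\|_{\mathrm{op}}=|k((x,p),(x',p'))|$. By Cauchy–Schwarz in the scalar RKHS,
\[
|k(u,v)|\le\sqrt{k(u,u)\,k(v,v)}\le G,
\]
so the $G$ of \eqref{eq:GD} is at most the $G$ assumed in the corollary.

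\textbf{Conclusion.} \Cref{theorem:OI} then gives
\[
\oigap_T(f)\le\|h_f\|_{\cH_d}\sqrt{2D^2GT}=\|h_f\|_{\cH_d}\sqrt{4GT}=2\|h_f\|_{\cH_d}\sqrt{GT}.
\]
This is already within the claimed $4(\sum_j\|h_j\|)\sqrt{TG}$, with a factor of 2 to spare. That slack also absorbs any looser convention, for example bounding $\epsilon_t$ differently or using $D\le 2$, which would give at most $4\|h_f\|\sqrt{GT}$. One should also note that here $p_t=\mu_t$, so the distinguishers' access to $p$ is exactly access to the full conditional distribution.

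\textbf{Main obstacle.} The only point needing care is checking that $G$ in \Cref{theorem:OI} is defined as a supremum over all pairs, not only diagonal ones. This is handled by the Cauchy–Schwarz step for positive semidefinite kernels above. Beyond that, the argument is bookkeeping of constants.
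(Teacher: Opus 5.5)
Your proposal is correct and follows the paper's route: the paper gets this corollary by combining the trivial sampling oracle ($\mu_t = p_t$), a separation oracle for the simplex, and \Cref{fact:rkhs} with \Cref{theorem:OI}. Your explicit bookkeeping ($D^2 = 2$, $|k(u,v)| \le \sqrt{k(u,u)\,k(v,v)} \le G$, and $\|h_f\|_{\cH_d} \le \sum_j \|h_j\|_{\cH_{\mathrm{scalar}}}$) gives the sharper bound $2\|h_f\|_{\cH_d}\sqrt{GT}$, which implies the stated bound with factor $4$.
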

We can further specialize this statement to guarantee indistinguishability with respect to common function classes as per this example:
\begin{example}
\label{example:multiclass}
Let $\cY = [d]$, $\cX = \{x: \|x\|_2\leq 1\} \subset \R^r$, and set $k((x,p), (x',p')) = \langle x, x' \rangle  + \langle p,p'\rangle$. 
Running the $\OIalg$ with $\Gamma((x,p),(x',p')) = I_d \cdot k((x,p), (x',p'))$ guarantees online outcome indistinguishability with respect to the set of functions,
\begin{align*}
    \cF = \{f(x,p,y): f(x,p,j) = \langle x, \theta_j \rangle + \langle p, \theta'_j \rangle, \quad (\theta_i, \theta_i') \in \R^r \times \R^d \text{ for all } j \in [d] \}.
\end{align*}
In particular, if we restrict $\|\theta_j\|_2 + \|\theta_j'\|_2 \leq B$ for every $j \in [d]$, then we get that $\oigap_T(f)$ is at most $4 dB \sqrt{T}$ for any $f$.
Furthermore, at time $t$ the algorithm runs in time $\cO(\poly(d, B) \log(t))$.
\end{example}

\subsection{Generative Models for Scalar-Valued Outcomes}

As a second application, we show how Defensive Generation can be used to produce distributions $\mu_t$ over scalar outcomes $y_t$ that are outcome indistinguishable from the perspective of any $f$ that examines higher-order moments of the distribution, up to some fixed degree $2d$. In symbols, we focus on the setting in which $\cY = [-1,1]$ and $s(y) = (1, y, y^2, \dots, y^{2d})$ and $\cZ$ is the (convex) set of moments:
\begin{align}
\label{eq:moment_cone}
    \mathcal{Z} \coloneqq \left\{\left(\int_{[-1,1]} y^0 \di \mu, \dots, \int_{[-1,1]} y^{2d} \di\mu\right): \mu \text{ is a Borel probability measure on }[-1,1]\right\}.
\end{align}
To extend our results to this setting, we leverage techniques from semi-algebraic optimization.
As we discuss in \Cref{app:moment cone}, $\mathcal{Z}$ admits an efficient separation oracle, thus satisfying the preconditions of our construction in \Cref{sec:calibration} for efficiently producing defensive forecasts $z_t \in\mathcal{Z}$. 
Furthermore, once a moment vector $z_t \in \mathcal{Z}$ is selected, it is possible to efficiently construct a discrete distribution $\mu_t$ on $[-1,1]$ whose moments match those specified by $z_t$. By sampling a $\yt_t \sim \mu_t$, we can complete the roundtrip and yield predictions that are indistinguishable with respect to tests that only examine a fixed number of moments. Tying these ideas together, we get the following corollary:

\begin{corollary}
\label{corr:scalar}
Set $\cY = [-1,1]$ and define $\cZ \subset \R^{2d +1}$ to be the set of moments as in \Cref{eq:moment_cone}. Let $\Gamma$ be a matrix-valued kernel with associated vector-valued RKHS $\cH \subseteq \{\cX \times \cZ \rightarrow \R^d \}$ and let $\cF$ be the set of functions, 
\begin{align*}
    \cF \coloneqq \{f(x,p,y): f(x,p,y)= h(x,p)^\top s(y) \text{ where } s(y) = (1, y, \dots, y^{2d})\}.
\end{align*}

The $\OIalg$ algorithm with the sampling and separation oracles from \Cref{app:moment cone} produces distributions $\mu_t$ and statistics $p_t^\top = (1, \E_{\mu_t}[\yt_t], \dots, \E_{\mu_t}[\yt_t^{\;2d}])$ such that for every $f \in \cF$, 
\begin{align*}
\oigap_T(f) \leq \|h\|_{\cH} \sqrt{\sum_{t=1}^T \E_{p_t} (p_t - s(y_t))^\top \Gamma((x_t,p_t),(x_t,p_t))(p_t - s(y_t)) + GD^2T}.
\end{align*}
\end{corollary}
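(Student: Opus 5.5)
The plan is to reduce Corollary~\ref{corr:scalar} to the generic machinery already in place. First we reduce OI to multicalibration, then bound the multicalibration error via the defensive forecasting potential argument. We keep the data-dependent term rather than bounding it crudely by $TD^2G$ as in Proposition~\ref{prop:multicalibration error}. Finally we check that the oracles needed by Theorem~\ref{theorem:OI} exist for the moment set $\cZ$.

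\textbf{Step 1: reduce OI to multicalibration.} Fix $f\in\cF$ with $f(x,p,y)=h(x,p)^\top s(y)$. At each round the sampling oracle returns $\mu_t$ with $\E_{\yt\sim\mu_t}[s(\yt)]=p_t$, so $\E_{\yt_t\sim\mu_t}[f(x_t,p_t,\yt_t)]=h(x_t,p_t)^\top p_t$. Writing $z_t=s(y_t)$, we get
\[
\oigap_T(f)=\Big|\sum_{t=1}^T\E_{p_t\sim\cD_t}h(x_t,p_t)^\top(z_t-p_t)\Big|.
\]
By the reproducing property and Cauchy--Schwarz this is at most $\|h\|_{\cH}Z_T$, where $Z_T=\|\sum_t\E_{p_t}\Phi(x_t,p_t)(z_t-p_t)\|_{\cH}$.

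\textbf{Step 2: bound $Z_T$ via the recursive expansion of Section~\ref{sec:calibration}.} Since $\cD_t$ solves \eqref{eq:evi} with error $\epsilon_t$, the cross term is at most $2\epsilon_t$. This gives
\[
Z_T^2\le\sum_t\big\|\E_{p_t}\Phi(x_t,p_t)(z_t-p_t)\big\|_{\cH}^2+2\sum_t\epsilon_t .
\]
We then apply Jensen's inequality (convexity of $\|\cdot\|_{\cH}^2$) to move the expectation outside. Using $\|\Phi(x,p)v\|_{\cH}^2=v^\top\Gamma((x,p),(x,p))v$, which follows from the reproducing property, the first sum is at most
\[
\sum_t\E_{p_t}(p_t-s(y_t))^\top\Gamma((x_t,p_t),(x_t,p_t))(p_t-s(y_t)).
\]
For the second sum we choose $\epsilon_t$ so that $2\sum_t\epsilon_t\le GD^2T$, e.g.\ $\epsilon_t=GD^2/2$, or absorb constants consistently with Theorem~\ref{theorem:OI}'s convention. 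Taking square roots then yields the claimed bound.

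\textbf{Step 3: verify the oracle hypotheses for $\cZ$.} We need a separation oracle for the moment set $\cZ$ and a sampling oracle that, given $z\in\cZ$, returns a discrete measure on $[-1,1]$ with moments $z$. For separation we would use the classical characterization of truncated moment sequences on $[-1,1]$ (Markov--Lukács). Membership is equivalent to positive semidefiniteness of two Hankel matrices: the moment matrix $(z_{i+j})$ and the localizing matrix $(z_{i+j}-z_{i+j+2})$. A violated PSD constraint, witnessed by an eigenvector $v$, gives a separating hyperplane that is linear in $z$. The affine constraint $z_0=1$ is handled directly.

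For sampling we would use a Gaussian-quadrature or flat-extension construction, i.e.\ an atomic measure with at most $d+1$ atoms from the kernel of the Hankel matrices or via Carathéodory. These are the results from \Cref{app:moment cone}, which we invoke.

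\textbf{Main obstacle.} The main obstacle is Step 3: robustness of sampling near the boundary of $\cZ$, where the Hankel matrices are singular. The EVI solver may also return points only approximately in $\cZ$, so we would project onto $\cZ$ or rely on the oracle's exactness. Given that we may assume \Cref{app:moment cone}, the corollary itself reduces to Steps 1–2.
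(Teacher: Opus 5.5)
Your proposal is correct and follows essentially the paper's route. The paper obtains this corollary by combining the OI-to-multicalibration reduction behind \Cref{theorem:OI} with the potential argument of \Cref{prop:multicalibration error}, keeping the data-dependent term instead of bounding it by $TD^2G$ (via Jensen and $\|\Phi(x,p)v\|_{\cH}^2=v^\top\Gamma((x,p),(x,p))v$, as you do), and plugging in the oracles of \Cref{app:moment cone}. Your Hankel/localizing-matrix separation test and quadrature-based sampling are just the moment-side dual of the paper's SOS description of $\cZ^*$, and your handling of the factor $2$ on the cross term (taking $\epsilon_t=GD^2/2$) is what makes the stated constant come out exactly, since \Cref{prop:multicalibration error} as written drops that factor.
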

Using this we can generate distributions over scalar outcomes that fool all low degree tests. That is, we can use to Defensive Generation algorithm to produce distributions $\mu_t$ over scalar outcomes in $[-1,1]$ such that all low degree moments are conditionally valid over all subsets of the Boolean hypercube computable by low-depth decision trees.
\begin{example}
\label{example:scalar}
Let $\cX =\{\pm 1\}^n$ be the Boolean hypercube and consider the polynomial kernel, $$k_{\poly}(x,x') = (1+ x^\top x')^r.$$ It is a well-known result that the RKHS $\cH_{\poly}$ for this kernel contains all Boolean functions $c: \{\pm 1\}^n \rightarrow \{0,1\}$ computable by decision trees of depth at most $r$.\footnote{See Proposition 4.12 in \cite{kernelOI} for a proof. As per \citet{o2014analysis},  a decision tree is a representation of a boolean function as a rooted binary tree in which the internal nodes are labeled by coordinates $[i] \in [n]$, the outgoing edges are labeled by $-1$ and $1$ and the leaves have real numbers corresponding to the outputs.}

Moreover, by \Cref{fact:rkhs}, the matrix-valued kernel $\Gamma((x,p),(x,p)) = I_d \cdot k_\poly(x,x')$ contains all functions, $h(x) = (c_1(x), \dots, c_d(x))$ where each $c_i$ is a Boolean function in $\cH_{\poly}$. Consequently, if we consider the space of functions $\cF$ that can be written as, 
\begin{align}
\label{eq:poly_kernel}
    f(x,p,y) = h(x)^\top s(y) = \sum_{j=1}^d c_j(x) y^j,
\end{align}
this space contains all tests of the form $f(x,p,y)= 1\{c(x)=1\} \cdot y^j$. Therefore, the Defensive Generation algorithm guarantees that for any $j=1,\dots, 2d$ and all low-degree boolean functions $c$,
\begin{align}
\label{eq:all_powers}
   \lim_{T\rightarrow\infty} \bigg| \frac{1}{T} \sum_{t=1}^T 1\{c(x_t) =1\} y_t^j  - \frac{1}{T} \sum_{t=1}^T 1\{c(x_t) =1\} \E_{\yt_t \sim \mu_t} \E_{\mu_t}[\yt_t^j] \bigg| = 0.
\end{align}
That is, the first $2d$ moments of the conditional distributions we produce (online) match those of the revealed outcomes. Furthermore, this is true conditionally over all the subsequences $\{x_t: c(x_t) = 1,  c \text{ is a decision tree of depth } r\}$. 
More formally, for all functions $f$ of the form in \Cref{eq:poly_kernel}, 
$$\oigap_T(f) \leq \cO(\poly(d) \cdot n^r \cdot \sqrt{T}).$$
\end{example}

While \cite{kernelOI} achieved the guarantee in \Cref{eq:all_powers} for the case where $j=1$, our algorithm produces distributions which (simultaneously) satisfy the guarantee for all $j=1, \dots, d$. This result is also closely related to the guarantees in \cite{gupta2021OnlineML}. Their results hold for higher order moments, but only with respect to a finite class of distinguishers. 

We note that one interesting consequence of producing distributions that have these high order guarantees of validity is that one can use them to derive high-probability prediction intervals via Chebyshev's inequality. We refer the reader to \cite{gupta2021OnlineML} for further details.

\subsection{Generative Models for High-Dimensional Outcomes}

As a final application, we consider guaranteeing outcome indistinguishability when outcomes are high-dimensional, $\cY \subset \R^d$.

\paragraph{Indistinguishablity of conditional expectations}
To start, we show that we can always guarantee online outcome indistinguishability when the distinguishers $f$ are linear functions of the first moment of $y$, $f(x,p,y) = h(x,p)^\top y$, and $s(y)=y$. That is, they guarantee that the distributions produced by the learner have first moments that are conditionally correct as measured by the functions $h$. 

For this setting, we let $\cZ = \cY \subset \R^d$ which is assumed to be any compact, convex set with diameter bounded by $D$ with an efficient separation oracle (e.g. $\cY=[-1,1]^d$). Since $s(y)$ is just the identity function, the problem of finding a distribution $\mu_t$ such that $\E_{\yt \sim \mu_t}[s(\yt_t)] = \E_{\yt \sim \mu_t}[\yt_t]=p_t$ is easy, just return a point mass distribution at $p_t$. 
Here, we have the following corollary:

\begin{corollary}
Let $\Gamma$ be a matrix valued kernel with operator norm uniformly bounded by $G$ and  with vector-valued RKHS $\cH \subseteq \{\cX \times \cZ \rightarrow \R^d\}$ where $\cZ$ has diameter bounded by $D$. Then, the $\OIalg$ algorithm with kernel $\Gamma$ guarantees online OI with respect to the set of functions $f(x,p,y) = h(x,p)^\top y $ where $h(x,p)\in \cH$. For any $f \in \cF$, $\oigap_T(f) \leq \|h\|_{\cH}  \sqrt{2T D^2G}.$
\end{corollary}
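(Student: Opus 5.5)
This corollary is a direct instantiation of \Cref{theorem:OI} with $s$ equal to the identity map. The approach is to verify the two oracle preconditions of that theorem for this setting. Then the OI gap is rewritten as a multicalibration error, and \Cref{prop:multicalibration error} is invoked with $\epsilon_t = D^2 G$.

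\emph{Checking the hypotheses.} Take $\cZ = \cY$, which is compact and convex with diameter at most $D$; a separation oracle is available by assumption. Take $s(y) = y$, so that $s:\cY \to \cZ$. Every $f\in\cF$ has the form $f(x,p,y) = h(x,p)^\top s(y)$ with $h\in\cH$, so $h_f = h$. The sampling oracle is trivial: on input $p_t\in\cZ$, return the point mass $\mu_t = \delta_{p_t}$. This is a valid probability measure on $\cY$ precisely because $p_t \in \cZ = \cY$, and it satisfies $\E_{\yt\sim\mu_t}[s(\yt)] = p_t$ exactly. This is the only place where $\cZ=\cY$ is used, and it is where convexity of $\cY$ matters.

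\emph{Reduction to multicalibration.} With these choices, set $z_t = y_t$. Then for each $t$,
\[
\E_{p_t}[f(x_t,p_t,y_t)] - \E_{p_t,\yt_t\sim\mu_t}[f(x_t,p_t,\yt_t)] = \E_{p_t}\big[h(x_t,p_t)^\top (z_t - p_t)\big],
\]
because $h(x_t,p_t)$ is fixed once $p_t$ is drawn and $\mu_t$ has mean $p_t$. Summing over $t$ shows that $\oigap_T(f)$ equals the multicalibration error of the forecasts $p_t$ produced by \Cref{alg:df} against the targets $z_t\in\cZ$.

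\emph{Concluding.} \Cref{prop:multicalibration error} with $\epsilon_t = D^2G$ gives
\[
\oigap_T(f) \le \|h\|_\cH\sqrt{TD^2G + TD^2G} = \|h\|_\cH\sqrt{2TD^2G}.
\]
Equivalently, one can cite \Cref{theorem:OI} directly once the oracles are in place. There is no real obstacle here.

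The one point to double-check is that the per-step term $\|\E_{p_t}\Phi(x_t,p_t)(z_t-p_t)\|_\cH^2$ is bounded by $G D^2$. This uses Jensen's inequality together with $\|\Phi(x,p)v\|_\cH^2 = v^\top\Gamma((x,p),(x,p))v \le G\|v\|^2$ and $\|z_t - p_t\|\le D$. It also requires that $z_t = y_t$ lies in $\cZ$, which holds since $\cZ = \cY$.
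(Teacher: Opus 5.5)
Your proposal is correct and matches the paper's argument: take $\cZ=\cY$, let $s$ be the identity, use the point mass $\delta_{p_t}$ as the sampling oracle, and apply \Cref{theorem:OI}, which amounts to \Cref{prop:multicalibration error} with $\epsilon_t = D^2G$. One small caveat is inherited from the paper rather than introduced by you: the cross term in the expansion of $Z_t^2$ is $2\,\E_{p_t\sim\cD_t}[S_t(p_t)^\top(z_t-p_t)]\le 2\epsilon_t$, so \Cref{prop:multicalibration error} actually gives $\|h\|_{\cH}\sqrt{TD^2G+2\sum_{t=1}^T\epsilon_t}$, and the constant $\sqrt{2}$ requires $\epsilon_t = D^2G/2$ (with $\epsilon_t = D^2G$ the bound becomes $\|h\|_{\cH}\sqrt{3TD^2G}$).
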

As a concrete instantiation of this result, we can guarantee indistinguishability with respect to the set of functions:
    $\cF = \{f(x,p,y) =  h(x,p)^\top y  \text{ where } h(x,p)= \Phi(x,p)\theta \text{ for } \theta \in \R^r\}.$
    
Here, $\Phi$ is a finite-dimensional feature map $\Phi(x,p) \in \R^{d\times r}$ and $h(x,p)$ is any function that lies in the span of these features. For instance, if we let, 
\begin{align}
\label{eq:finite_g}
    \Phi(x,p)^\top = \big[
        g_1(x,p) \mid \dots \mid g_r(x,p) 
    \big] \in \R^{d \times r}
\end{align}
where $g_1,\dots,g_r$ is any arbitrary set of $r$ functions then we get that $\cH = \mathrm{span}(\{g_1, \dots, g_r\})$. This is just one choice of an explicit feature map, but one could of course consider others.

\begin{example}
\label{example:vector}
Let $\cY=\cZ$ be a compact subset of $\R^d$ with diameter $D$ and let $\Phi: \cX \times \cZ \rightarrow \R^{r \times d}$ be a feature map. The $\OIalg$ algorithm with matrix valued kernel $\Gamma((x,p),(x',p')) = \Phi(x,p)^\top \Phi(x',p')$ guarantees online outcome indistinguishability with respect to the set of functions $f(x,p,y)= h(x,p)^\top y $ where $h(x,p) = \Phi(x,p) \theta$. In particular, for every such $h$,
\begin{align*}
\oigap_T(f) \leq \|\theta\|_{2}  \sqrt{2D^2 T \cdot \sup_{x,p} \|\Phi(x',p')\|^2_{\mathrm{op}}}.
\end{align*}
Therefore, if we set $\Phi$ as in \eqref{eq:finite_g} for functions $g_1,\dots,g_r$, where $\max_{1\leq j \leq r} \|g_j(x,p)\|_2 \leq B$, then for any $f(x,p,y) = g_j(x,p)^\top y$, we have $\oigap_T(f) \leq  \sqrt{2rB^2D^2T}$.
\end{example}

\paragraph{Indistinguishability with respect to higher-order moments}
In general, the defensive generation approach described so far can guarantee efficient online outcome indistinguishability with respect to $d$-order moments, when the set of such moments admits an efficient separation oracle. This condition is equivalent to the existence of efficient algorithms for the minimization of polynomials of degree up to $d$ on $\cY$.

As an illustration, we show that on the high-dimensional ball $\cY = \{y \in \R^d: \|y\|_2 \leq 1\}$, we can guarantee indistinguishability with respect to functions $f$ that examine the first two moments of the distributions $\mu_t$, 
\begin{align}
\label{eq:f_vec}
f(x,p,y) = h(x,p)^\top s(y) \text{, and }s(y)=(y_1, \dots, y_d\;, \;y_1^2\;, y_1 y_2 \;, \dots, y_i y_j) \text{ for } i,j \in [d], i\leq j.
\end{align}
In other words, they guarantee that the distributions $\mu_t$ produced online have high-dimensional means, $\E_{\mu_t}[\yt_t]$, and covariances, $\E_{\mu_t}[\yt_t \yt_t^\top] \in \R^{d \times d}$, that are conditionally correct as measured by $h$.
In this setting, the set of moments $\cZ$ is 
\begin{align}
\label{eq:2_cone}
    \cZ = \{(v,Q): \exists \; \text{a probability measure } \mu \text{ over } \|y\|_2 \leq 1 \text{ such that } \E_{\mu}[y]=v, \E_{\mu}[yy^\top] = Q\}
\end{align}
Since quadratic functions on the unit ball can always be optimized efficiently (by performing a singular value decomposition), a separation oracle for $\cZ$ can always be constructed efficiently. In fact, in this case we do not even need to resort to complicated separation oracle constructions: as it turns out, the set $\cZ$ admits a simple characterization. Indeed,
an application of the S-Lemma \citep{yakubovich1971s} lets us write
\begin{align}\label{eq:S}
    \cZ = \{(v,Q): Q \succeq 0, Q \succeq vv^\top, \Tr[Q]\leq 1\},
\end{align}
where $\Tr[Q]$ denotes the trace of $Q$ and $A\succeq B$ denotes that $A - B$ is positive semidefinite. (A more constructive proof of \eqref{eq:S} via some intermediate results that will be used later in this section is available in Appendix \ref{app:2_cone}.) From this rewriting, it is straightforward to efficiently check whether any given proposed moments $(v',Q')$ belong to $\cZ$, or return a separating hyperplane (violated constraint) otherwise. 

Moreover, given $(v,Q)\in \cZ$, we can efficiently find an atomic probability measure $\mu$ over the unit ball such that $\E_{y \sim \mu}[y]=v$ and $\E_{\mu}[yy^\top]=Q$. It involves nothing more complicated than taking the SVD of a matrix. We present the procedure in \Cref{alg:moment} and the proof of its correctness in Appendix \ref{app:2_cone}. 

\begin{algorithm}[t!]
    \caption{Backfitting a probability measure $\mu$.}
    \label{alg:moment}
    \begin{algorithmic}[1]
        \Statex \textcolor{gray}{\texttt{//} We use $\delta(y)$ to denote a point mass on a point $y$.}
        \State \textbf{Input:} PSD matrix $Q$ and vector $v\in \mathbb{R}^d$ such that $Q \succeq vv^\top$
        \State Compute the SVD of $Q - vv^\top = \sum_{i=1}^d \sigma_i u_i u_i^\top$
        \State \textbf{For } $i = 1 \dots d$,
        \State \quad Solve for the 2 real-valued roots $t_i^+> 0$ and $t_i^-<0$ that satisfy $\|v + t \cdot u_i\|=1$
        \State \quad Define $$y_i^+ = v + t_i^+u_i, \;y_i^- = v + t_i^-u_i \
        \quad \text{ and }\quad \lambda_i^+ =\frac{\sigma_i}{t_i^+(t_i^+- t_i^-)}, 
        \lambda_i^- =\frac{\sigma_i}{t_i^-(t_i^-- t_i^+)}$$
        \State \textbf{Return} $\mu = \lambda_0 \delta(v) + \sum_{i=1}^d[\lambda_i^+ \delta(y_i^+) +  \lambda_i^- \delta(y_i^-)]$ where $\lambda_0 = 1-\sum_{i=1}^d (\lambda^+_i + \lambda^-_i)$
    \end{algorithmic}
\end{algorithm}

Given that we can implement a separation oracle and a way to solve for a probability measure $\mu$ whose moments match a target vector of moments in $\cZ$, we have the following corollary:

\begin{corollary}
\label{corr:Rd}
Let $\Gamma$ be a matrix-valued kernel with operator norm uniformly bounded by $G$ and  with vector-valued RKHS $\cH \subseteq \{\cX \times \cZ \rightarrow \R^{d + d(d+1)/2}\}$ where $\cZ$ is as in \eqref{eq:2_cone} and $s(y) \in \R^{d + d(d+1)/2} $. Then, $\cZ$ has diameter at most $\sqrt{8}$ and the $\OIalg$ algorithm with kernel $\Gamma$ guarantees online OI with respect to the set of functions $f(x,p,y)$ defined in \eqref{eq:f_vec}. For any $f \in \cF$, $$\oigap_T(f) \leq 4\|h\|_{\cH}  \sqrt{TG}.$$
\end{corollary}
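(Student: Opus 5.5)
The plan is to reduce Corollary~\ref{corr:Rd} to \Cref{theorem:OI}. That theorem needs three inputs: a separation oracle for $\cZ$, a sampling oracle returning $\mu$ with $\E_\mu[s(\yt)] = z$, and a bound on the diameter $D$ of $\cZ$. Once these are in place, \Cref{theorem:OI} gives $\oigap_T(f) \le \|h\|_{\cH}\sqrt{2D^2 G T}$. With $D^2 \le 8$ this is $\|h\|_{\cH}\sqrt{16 G T} = 4\|h\|_{\cH}\sqrt{TG}$, which is the claimed rate.

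\textbf{Separation oracle.} I will use the characterization \eqref{eq:S}, $\cZ = \{(v,Q): Q\succeq vv^\top, \Tr[Q]\le 1\}$, taken as given (via the S-Lemma or Appendix~\ref{app:2_cone}). By a Schur complement, $Q \succeq vv^\top$ is equivalent to the block matrix $\begin{pmatrix}1 & v^\top\\ v & Q\end{pmatrix}$ being PSD. So a query $(v',Q')$ is handled as follows.
\begin{enumerate}
\item Compute an eigen-decomposition of the block matrix.
\item If it has a negative eigenvector $w$, the linear functional $w^\top(\cdot)w$ gives a separating hyperplane.
\item Otherwise, if $\Tr[Q']>1$, the trace constraint itself separates.
\end{enumerate}
Both steps take $\poly(d)$ time. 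The identification of $(v,Q)$ with the vector $s$-coordinates (upper triangle of $Q$) is linear, so separation transfers directly.

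\textbf{Sampling oracle.} I will show \Cref{alg:moment} is correct.
\begin{itemize}
\item Since $\|v\|^2 \le \Tr[Q] \le 1$ (from $Q\succeq vv^\top$), $v$ lies in the ball. Hence for each unit $u_i$ the equation $\|v+tu_i\|=1$ has roots $t_i^+\ge 0\ge t_i^-$. The degenerate case $\|v\|=1$ needs separate care: there $\sigma_i$ must vanish on the relevant directions by the trace bound.
\item The weights are chosen so that $\lambda_i^+ t_i^+ + \lambda_i^- t_i^- = 0$ and $\lambda_i^+ (t_i^+)^2 + \lambda_i^-(t_i^-)^2 = \sigma_i$. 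Then the mean is $v$ and the second moment is $vv^\top + \sum_i \sigma_i u_iu_i^\top = Q$.
\item It remains to show $\lambda_0 \ge 0$. We have $\lambda_i^++\lambda_i^- = \sigma_i/(-t_i^+t_i^-)$, and $-t_i^+t_i^- = 1-\|v\|^2$ (product of roots of $t^2+2t\,u_i^\top v + \|v\|^2-1$). So $\sum_i(\lambda_i^++\lambda_i^-) = \Tr[Q-vv^\top]/(1-\|v\|^2) \le 1$, using $\Tr Q\le 1$.
\end{itemize}
This computation is the main obstacle, though it is clean; the boundary case $\|v\|=1$ forces $Q=vv^\top$ and the point mass $\delta(v)$ suffices.

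\textbf{Diameter.} For $(v,Q),(v',Q')\in\cZ$, $\|v-v'\|^2 \le 4$. For the second-moment part, $\|Q-Q'\|_F^2 \le 2\|Q\|_F^2 + 2\|Q'\|_F^2 \le 4$, since $\|Q\|_F\le \Tr Q\le 1$ for PSD $Q$. Off-diagonal entries appear only once in $s$, so the $s$-coordinate distance is at most the Frobenius one. The total is at most $8$, so $D\le\sqrt 8$. Plugging these three facts into \Cref{theorem:OI} finishes the proof.
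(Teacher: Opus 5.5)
Your proposal is correct and takes essentially the same route as the paper: it reduces to \Cref{theorem:OI}, using the PSD characterization \eqref{eq:S} for the separation oracle and \Cref{alg:moment} (whose correctness is \Cref{prop:moment_sampling_2}) as the sampling oracle, and then substitutes $D^2 \le 8$ to turn $\sqrt{2D^2GT}$ into $4\sqrt{TG}$. Your Schur-complement separation step and explicit diameter computation spell out details the paper leaves implicit; the diameter bound is in fact slightly loose, since $\langle Q,Q'\rangle \ge 0$ for PSD matrices already gives $\|Q-Q'\|_F^2 \le 2$.
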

As a concrete instantiation of this result, we can guarantee indistinguishability with respect to the set of functions in \Cref{eq:f_vec}
where $s(y) \in \R^{m}$ for $m = d + d(d+1)/2$ is a vector containing all the variables in the first and second moments of $y$ (e.g. $y_i,y_i^2$ and also the mixed pairs $y_iy_j$ for $y=(y_1, \dots  y_d)$). In this example, $\Phi$ is a finite-dimensional feature map $\Phi(x,p) \in \R^{m \times r}$ and $h(x,p)$ is any function that lies in the span of these features. For instance, if we let, 
\begin{align}
\label{eq:finite_g}
    \Phi(x,p)^\top = \big[
        g_1(x,p) \mid \dots \mid g_r(x,p) 
    \big] \in \R^{m \times r}
\end{align}
where $g_1,\dots,g_r$ is any arbitrary set of $r$ functions then we get that $\cH = \mathrm{span}(\{g_1, \dots, g_r\})$. This is just one choice of an explicit feature map, but one could of course consider others.

\begin{example}
\label{example:vector2}
Let $\cZ$ (see \Cref{eq:2_cone}) be the set of first and second moments of probability distributions over the unit ball  and let $\Phi: \cX \times \cZ \rightarrow \R^{r \times m}$ be a feature map. 

The $\OIalg$ algorithm with  kernel $\Gamma((x,p),(x',p')) = \Phi(x,p)^\top \Phi(x',p')$ guarantees online outcome indistinguishability with respect to the set of functions $f(x,p,y)=h(x,p)^\top s(y)$  where $h(x,p) = \Phi(x,p) \theta$. In particular, for every such $h$,
\begin{align*}
\oigap_T(f) \leq 4 \|\theta\|_{2}  \sqrt{ T \cdot \sup_{x,p} \|\Phi(x',p')\|^2_{\mathrm{op}}}.
\end{align*}
Therefore, if we set $\Phi$ as in \eqref{eq:finite_g} for functions $g_1,\dots,g_r$, where $\max_{1\leq j \leq r} \|g_j(x,p)\|_2 \leq B$, then for any $f(x,p,y) = g_j(x,p)^\top s(y)$, 
$$\oigap_T(f) \leq  4\sqrt{rB^2T}.$$
\end{example}

We note that while we presented the results in this last section for the case where $\cY$ is the unit ball. They apply more generally to the case where $\cY$ is any ellipsoid $$\cY = \{x: (x-b)^\top A (x-b) \leq 1, A\succ 0, b\in R^d\},$$
by simply changing basis $x \mapsto A^{-1/2}x +b$. 

\section{Acknowledgments}

GF was supported in part by the National Science Foundation award CCF-2443068, the Office of Naval Research grant N000142512296, and an AI2050 Early Career Fellowship. We would like to thank Benjamin Recht for his helpful comments.

{\small
\bibliography{ref}

@article{perdomo2025defense,
  title={In Defense of Defensive Forecasting},
  author={Perdomo, Juan Carlos and Recht, Benjamin},
  journal={arXiv preprint arXiv:2506.11848},
  year={2025}
}

@inproceedings{zinkevich2003online,
	author = {Zinkevich, Martin},
	booktitle = {International Conference on Machine Learning},
	title = {Online convex programming and generalized infinitesimal gradient ascent},
	year = {2003}}

@article{gupta2021OnlineML,
  title={Online Multivalid Learning: Means, Moments, and Prediction Intervals},
  author={Varun Gupta and Christopher Jung and Georgy Noarov and Mallesh M. Pai and Aaron Roth},
  journal={Innovations in Theoretical Computer Science},
  year={2022},
}

@inproceedings{jung2021moment,
  title={Moment multicalibration for uncertainty estimation},
  author={Jung, Christopher and Lee, Changhwa and Pai, Mallesh and Roth, Aaron and Vohra, Rakesh},
  booktitle={Conference on Learning Theory},
  year={2021},
}

@article{foster1997calibrated,
  title={Calibrated learning and correlated equilibrium},
  author={Foster, Dean P and Vohra, Rakesh V},
  journal={Games and Economic Behavior},
  year={1997},
}

@inproceedings{foster2006calibration,
  title={Calibration via regression},
  author={Foster, Dean P and Kakade, Sham M},
  booktitle={IEEE Information Theory Workshop},
  year={2006},
}

@article{foster1998asymptotic,
  title={Asymptotic calibration},
  author={Foster, Dean P and Vohra, Rakesh V},
  journal={Biometrika},
  year={1998},
}

@article{fudenberg1999easier,
  title={An easier way to calibrate},
  author={Fudenberg, Drew and Levine, David K},
  journal={Games and Economic Behavior},
  year={1999},
}

@inproceedings{dwork2021outcome,
	author = {Dwork, Cynthia and Kim, Michael P and Reingold, Omer and Rothblum, Guy N and Yona, Gal},
	booktitle = {ACM Symposium on Theory of Computing},
	title = {Outcome Indistinguishability},
	year = {2021}}

@article{zhang2025expected,
  title={Expected variational inequalities},
  author={Zhang, Brian Hu and Anagnostides, Ioannis and Tewolde, Emanuel and Berker, Ratip Emin and Farina, Gabriele and Conitzer, Vincent and Sandholm, Tuomas},
  journal={International Conference on Machine Learning},
  year={2025}
}

@phdthesis{simchowitz2021statistical,
  title={Statistical complexity and regret in linear control},
  author={Simchowitz, Max},
  year={2021},
  school={University of California, Berkeley}
}

@InProceedings{beyondbernoulli,
  title={Beyond Bernoulli:Generating Random Outcomes that cannot be Distinguished from Nature},
  author={Dwork, Cynthia and Kim, {Michael P.} and Reingold, Omer and Rothblum, {Guy N.} and Yona, Gal},
  booktitle = {International Conference on Algorithmic Learning Theory},
  year={2022}
}

@inproceedings{parrilo2006polynomial,
  title={Polynomial games and sum of squares optimization},
  author={Parrilo, Pablo A},
  booktitle={Proceedings of the 45th IEEE Conference on Decision and Control},
  year={2006},
  organization={IEEE}
}

@book{jiawang,
	address = {Philadelphia, PA},
	author = {Nie, Jiawang},
	publisher = {Society for Industrial and Applied Mathematics},
	title = {Moment and Polynomial Optimization},
	year = {2023}}

@inproceedings{lee2018efficient,
  title={Efficient convex optimization with membership oracles},
  author={Lee, Yin Tat and Sidford, Aaron and Vempala, Santosh S},
  booktitle={Conference On Learning Theory},
  year={2018},
}

@book{Grotschel1993,
	author = {Gr\"otschel, Martin and Lov\'{a}sz, L\'{a}szl\'{o} and Schrijver, Alexander},
	title = {{Geometric Algorithms and Combinatorial Optimization}},
	year = {1993}
}

@inproceedings{
fishelson2025highdimensional,
title={High-Dimensional Calibration from Swap Regret},
author={Maxwell Fishelson and Noah Golowich and Mehryar Mohri and Jon Schneider},
booktitle={Conference on Neural Information Processing Systems},
year={2025},
}

@article{peng2025high,
  title={High dimensional online calibration in polynomial time},
  author={Peng, Binghui},
  journal={Symposium on the Foundations of Computer Science},
  year={2025}
}

@article{kernelOI,
	author = {Dwork, Cynthia and Hays, Chris and Immorlica, Nicole and Perdomo, Juan C and Tankala, Pranay},
	journal = {Conference on Learning Theory},
	title = {From Fairness to Infinity: Outcome-Indistinguishable (Omni) Prediction in Evolving Graphs},
	year = {2025}}

@article{okoroafor2025near,
	author = {Okoroafor, Princewill and Kleinberg, Robert and Kim, Michael P},
	journal = {arXiv preprint arXiv:2501.17205},
	title = {Near-Optimal Algorithms for Omniprediction},
	year = {2025}}

@InProceedings{omnipredictors,
  author =	{Gopalan, Parikshit and Kalai, Adam Tauman and Reingold, Omer and Sharan, Vatsal and Wieder, Udi},
  title =	{{Omnipredictors}},
  booktitle =	{Innovations in Theoretical Computer Science Conference},
  year =	{2022}
}

@article{noarov2025high,
  title={High-dimensional prediction for sequential decision making},
  author={Noarov, Georgy and Ramalingam, Ramya and Roth, Aaron and Xie, Stephan},
  journal={International Conference on Machine Learning},
  year={2025}
}

@article{lu2025sample,
  title={Sample efficient omniprediction and downstream swap regret for non-linear losses},
  author={Lu, Jiuyao and Roth, Aaron and Shi, Mirah},
  journal={Conference on Learning Theory},
  year={2025}
}

@article{daskalakis2009complexity,
  title={The complexity of computing a Nash equilibrium},
  author={Daskalakis, Constantinos and Goldberg, Paul W and Papadimitriou, Christos H},
  journal={Communications of the ACM},
  volume={52},
  number={2},
  pages={89--97},
  year={2009},
  publisher={ACM New York, NY, USA}
}

@article{chen2009settling,
  title={Settling the complexity of computing two-player Nash equilibria},
  author={Chen, Xi and Deng, Xiaotie and Teng, Shang-Hua},
  journal={Journal of the ACM (JACM)},
  volume={56},
  number={3},
  pages={1--57},
  year={2009},
  publisher={ACM New York, NY, USA}
}

@article{yakubovich1971s,
  title={S-procedure in nonlinear control theory},
  author={Yakubovich, Vladimir A},
  journal={Vestnik Leninggradskogo Universiteta, Ser. Matematika},
  year={1971}
}

@article{vovk2007k29,
	author = {Vovk, Vladimir},
	journal = {Theoretical Computer Science},
	title = {Non-asymptotic calibration and resolution},
	year = {2007}}

@article{sandroni2003calibration,
  title={Calibration with many checking rules},
  author={Sandroni, Alvaro and Smorodinsky, Rann and Vohra, Rakesh V},
  journal={Mathematics of Operations Research},
  year={2003},
}

@article{lehrer2001any,
  title={Any inspection is manipulable},
  author={Lehrer, Ehud},
  journal={Econometrica},
  year={2001},
}

@article{foster2021forecast,
	author = {Foster, Dean P and Hart, Sergiu},
	journal = {Journal of Political Economy},
	title = {Forecast hedging and calibration},
	year = {2021}}

@article{gopalan2022loss,
	author = {Gopalan, Parikshit and Hu, Lunjia and Kim, Michael P and Reingold, Omer and Wieder, Udi},
	journal = {Innovations in Theoretical Computer Science},
	title = {Loss minimization through the lens of outcome indistinguishability},
	year = {2023}}

@book{o2014analysis,
  title={Analysis of boolean functions},
  author={O'Donnell, Ryan},
  year={2014},
  publisher={Cambridge University Press}
}

@inproceedings{vovk2005defensive,
	author = {Vovk, Vladimir and Takemura, Akimichi and Shafer, Glenn},
	booktitle = {International Workshop on Artificial Intelligence and Statistics},
	title = {Defensive forecasting},
	year = {2005}}

@inproceedings{hebert2018multicalibration,
	author = {H{\'e}bert-Johnson, Ursula and Kim, Michael and Reingold, Omer and Rothblum, Guy},
	booktitle = {International Conference on Machine Learning},
	title = {Multicalibration: Calibration for the (computationally-identifiable) masses},
	year = {2018}}

@article{alvarez2012kernels,
  title={Kernels for vector-valued functions: A review},
  author={Alvarez, Mauricio A and Rosasco, Lorenzo and Lawrence, Neil D and others},
  journal={Foundations and Trends in Machine Learning},
  year={2012},
}

@article{nemirovski2010accuracy,
  title={Accuracy certificates for computational problems with convex structure},
  author={Nemirovski, Arkadi and Onn, Shmuel and Rothblum, Uriel G},
  journal={Mathematics of Operations Research},
  year={2010},
}

@article{sigecom,
  title={Turning defense into offense in {$O(\log 1/\epsilon)$} steps: Efficient constructive proof of the minimax theorem},
  author={Gabriele Farina},
  year={2026},
  journal={ACM SIGecom Exchanges},
}

@article{farina2024polynomial,
  title={Polynomial-Time Computation of Exact {$\Phi$}-Equilibria in Polyhedral Games},
  author={Farina, Gabriele and Pipis, Charilaos},
  journal={Advances in Neural Information Processing Systems},
  year={2024}
}
\bibliographystyle{abbrvnat}
}

\appendix

\section{Expected Variational Inequalities}\label{app:evi}

\subsection{Proof of Proposition~\ref{prop:mc reg bnd}}\label{sec:mc reg}

    From the definition of $S_t$, the norm of the utility gradient is at most $G' := \max_z \|S_t(z)\|_2 \le t GD$. Hence, by setting the learning rate $\eta = D/(G\sqrt{T})$, and using the known analysis of the online projected gradient ascent algorithm \citep{zinkevich2003online}, we can write
    \[
        \frac{\mathrm{Regret}_K}{K} \le 2 G'D \frac{\sqrt{K}}{K} \le 2 D^2 G \frac{t}{\sqrt{K}}.
    \]
    This shows that it is enough to set $K = t^2$ to obtain an error of $2 D^2 G$ in the solution of \eqref{eq:evi} and thus, by \cref{prop:multicalibration error}, a multicalibration error bounded by $2\|h\|_\cH \sqrt{D^2 G T}$ against any $h \in \cH$.

    To achieve a given average multicalibration error $\epsilon$, it is then necessary to run \cref{alg:df} for $T = D^2 G / \epsilon^2 $. The only step left to complete the proof is then to estimate the complexity of each iteration of the no-regret-based EVI solution algorithm.
    
    Each iteration involves evaluating $S_t$ at the current $p_k$, taking a gradient step, and projecting onto $\cZ$. The latter two operations require $\poly(d)$ time. Thus, at every time $t$ we need to perform an amount of work that is of order
    $
        t^2 (\mathrm{poly}(d) + \textsf{Eval}(S_t)),
    $
    where $\textsf{Eval}(S_t)$ denotes the cost of evaluating $S_t$. We distinguish two possibilities for evaluating $S_t$.
    \begin{itemize}
        \item In general, $S_t$ is defined as a sum of $t$ expectations. Each expectation is over the distribution $\cD_\tau$ that solved \eqref{eq:evi} at time $\tau$. As discussed above, the support at time $\tau$ is $K_\tau = \tau^2$. So, given our assumption that each evaluation of $\Gamma$ takes $\poly(d)$ time, we are left with a $\cO(t^3 \poly(d))$ time bound for each evaluation of $S_t$.

        Putting all the pieces together, we are left with a $\cO(t^5 \poly(d))$ time per iteration of \cref{alg:df}, leading to a $\cO(T^6 \poly(d))$ runtime to complete $T$ iterations. Plugging $T = D^2 G / \epsilon^2$ yields a bound of $\poly(d,D,G)\cdot\epsilon^{-12}$.
        
        \item When $r = \cO(\poly(d, D, G))$, the cost of evaluating $S_t(p)$ can be amortized, by accumulating over time the vector quantity
        \[
            m_t \coloneqq \sum_{\tau=1}^{t-1} \E_{p_\tau \sim \cD_\tau} \Phi(x_\tau, p_\tau) (z_\tau - p_\tau).
        \]
        Indeed, by definition of $S_t$, we have $S_t(p) = \Phi(x_t, p)^\top m_t$, leading to a cost of $\cO(rd)$ for each evaluation of $S_t$. To solve the EVI, we then require $t^2 \poly(d) m$ time.

        After every iteration $t$ of \cref{alg:df}, the quantity $m_t$ can be updated by summing the new expectation arising from $\cD_t$. Since the distribution has support $K = t^2$, such an update has a $\cO(t^2 \poly(d))$ cost. 

        In total, each iteration of \cref{alg:df} requires $t^2 \poly(d) m$ time, for a total of $T^3 \poly(d) r$ time. Plugging $T = D^2 G / \epsilon^2$ yields a bound of $\cO(\poly(d, D, G) \cdot r \cdot \epsilon^{-6})$ time to reach $\epsilon$ average multicalibration error.
    \end{itemize}
    Taking the minimum between the two cases yields the statement.

\subsection{Intuition Behind the Construction of \cite{zhang2025expected}}\label{sec:evi description}
As mentioned in \cref{sec:calibration}, significantly faster approaches for solving EVIs (and even harder generalizations) at $\mathrm{log}(1/\epsilon)$ rates have been very recently developed by leveraging a new constructive version of the minimax theorem \citep{farina2024polynomial,zhang2025expected,sigecom}.

We now give a high-level intuition for these methods; for more details, we refer the reader to the paper of~\citet{zhang2025expected}. These faster algorithms can be understood as operating in two phases:
\begin{itemize}
    \item First, they produce an extremely sparse set $\cP = \{p_1, \dots, p_K\} \subset \cZ$ of size \[
        K = \cO(\mathrm{poly}(d)\log(G,D,1/\epsilon)).
    \]
    
    \item Then, they search for a distribution over the $K$ points in $\cP$ that guarantees value $\eps$ for all $z \in \cZ$. This discrete distribution $(\lambda_1, \dots, \lambda_K)$ solving the EVI problem can be computed by solving the convex optimization problem
    \[
        \argmin_{\lambda \in \Delta^K} \max_{z\in\cZ} \sum_{k=1}^K \lambda_k S(p_k)^\top (z - p_k).
    \]
    over $\Delta^K$. This can be solved using standard convex optimization techniques in time polynomial in $K$, the diameter of $\cZ$ and $S$ \citep{Grotschel1993}.
\end{itemize} 

Conceptually, to produce the sparse support, the algorithm makes use of the ellipsoid algorithm to certify the emptiness of the set $\Omega \coloneqq \{z \in \cZ: S(p)^\top (z - p) > 0 ~~\forall p \in \cZ\}$. The emptiness of $\Omega$ is direct, since for any $z \in \cZ$, the constraint indexed by $p = z$ is trivially violated. By running the ellipsoid over $\Omega$ using $p=z$ as the separation oracle, the ellipsoid method is therefore able to produce a trace of violated constraints indexed by the ellipsoid centers $p_1=z_1, p_2=z_2, \dots, p_K=z_K$ that \emph{sparsely certifies} the emptiness of $\Omega$. The Farkas lemma then implies that a distribution over the constraints must be an EVI solution, justifying the soundness of the second step.

\subsection{Proof of Proposition~\ref{prop:mc ellipsoid bnd}}\label{sec:mc ellipsoid}

The ellipsoid-based algorithm for EVIs is able to solve the EVI problem with only $$\cO(\poly(d)\log(D, G, \log 1/\epsilon))$$ calls to the operator $S_t$, producing a distribution with support size bounded by the same quantity. Following the discussion in \cref{sec:mc reg}, we distinguish two cases for the cost of evaluating $S_t$.
    \begin{itemize}
        \item In general, $S_t$ is defined as a sum of $t$ expectations. Each expectation is over the distribution $\cD_\tau$ that solved \eqref{eq:evi} at time $\tau$. Since the support at time $\tau$ is $K_\tau = \tilde\cO(\log(t/\eps))$, given our assumption that each evaluation of $\Gamma$ takes $\poly(d)$ time, we are left with a $\cO(t \cdot \poly(d))$ time bound for each evaluation of $S_t$.

        Putting all the pieces together, we are left with a $\cO(t \log^2(t/\eps))$ time per iteration of \cref{alg:df}, leading to a $\tilde\cO(T^2 \poly(d))$ runtime to complete $T$ iterations. Plugging $T = D^2 G / \epsilon^2$ yields a bound of $\tilde\cO(\poly(d, D, G) \cdot \epsilon^{-4})$.
        
        \item When $r = \cO(\poly(d, D, G))$, the cost of evaluating $S_t(p)$ can be amortized as described in \cref{sec:mc reg}, leading to a $\cO(\poly(d) r)$ evaluation time for $S_t$. Hence, we can solve the EVI in time $\tilde\cO(\poly(d) r)$

        After every iteration $t$ of \cref{alg:df}, the quantity $S_t$ can be updated by summing the new expectation arising from $\cD_t$. Since the distribution has support $K = \tilde\cO(1)$, such an update has a $\tilde\cO(\poly(d))$ cost. 

        In total, each iteration of \cref{alg:df} requires $\tilde\cO(\poly(d) r)$ time, for a total of $\cO(T \poly(d) r)$ time. Plugging $T = D^2 G / \epsilon^2$ yields a bound of $\cO(\poly(d, D, G) \cdot r \cdot \epsilon^{-2})$ time to reach $\epsilon$ average multicalibration error.
    \end{itemize}
    Taking the minimum between the two cases yields the statement.

\section{More details on the Cone of the first $d$ Moments in $\R$}\label{app:moment cone}

In the interest of keeping the exposition as self-contained as possible, in this section we sketch classical results regarding the algebraic and computational properties of the cone of moments. For more details, we refer the reader to Chapters 3.3 and 3.4 of the book on moment and polynomial optimization by \citet{jiawang}, or the paper of \citet{parrilo2006polynomial}.

To lighten notation, in this section we use the notation $\langle \cdot,\cdot\rangle$ to denote the standard dot product in $\R^{2d+1}$.

\paragraph{Efficient Separation Oracle for the Moment Set}\label{sec:moment cone}

An efficient separation oracle for $\mathcal{Z}$ can be constructed by leveraging a duality result between moments and nonnegative polynomials, and invoking classical Positivstellensatz results to characterize the latter via the positive semidefinite (PSD) cone.
To start, it is well understood that the dual set $\mathcal{Z}^*$ of the $\mathcal{Z}$ from \eqref{eq:moment_cone}, defined as
\[
    \mathcal{Z}^* \coloneqq \{a = (a_0, a_1, \dots, a_{2d}) \in \mathbb{R}^{2d+1}: \langle a, z \rangle \ge 0 \quad\forall z \in \mathcal{Z}\}
\]
corresponds to the set of polynomials that are nonnegative everywhere on the interval $[-1,1]$:
\[
    \mathcal{Z}^* = \left\{(a_0, a_1, \dots, a_{2d}) \in \mathbb{R}^{2d+1}: \sum_{i=0}^{2d} a_i y^i \ge 0 \quad\forall y \in [-1,1]\right\}.
\]
To see this, observe that for any $a \in \mathbb{R}^{2d+1}$,
\[
    \langle a, z\rangle \ge 0 ~~\forall z \quad\iff\quad
    \sum_{i=0}^{2d} \left(a_i \int_{[-1,1]} y^i \di\mu\right) \ge 0~~\forall \mu
    \quad\iff\quad
    \int_{[-1,1]} \left(\sum_{i=0}^{2d} a_i y^i\right) \di\mu \ge 0~~\forall \mu.
\]
Since, in particular, we are free to take $\mu$ to be a Dirac delta centered at any $y \in [-1,1]$, it is immediate to see that
\[
\int_{[-1,1]} \left(\sum_{i=0}^{2d} a_i y^i\right) \di\mu \ge 0~~\forall \mu \quad\iff\quad \sum_{i=0}^{2d} a_i y^i \ge 0~~\forall y \in [-1,1],
\]
completing the proof. Both $\mathcal{Z}$ and $\mathcal{Z}^*$ are convex and closed sets.

The dual set $\mathcal{Z}^*$ plays a key role in constructing an efficient separation oracle for $\mathcal{Z}$. Indeed, one can prove that $\mathcal{Z}^*$ defines a \emph{cutting plane characterization} of $\mathcal{Z}$, in the sense that
\begin{equation}\label{eq:bidual}
    \mathcal{Z} = \{z \in \mathbb{R}^{2d+1} : z_0 = 1 \land \langle z, a\rangle \ge 0 \quad \forall a \in \mathcal{Z}^*\}.
\end{equation}
The proof of the above result is a standard application of separation for closed convex sets. \cref{eq:bidual} implies that a separation oracle for $\mathcal{Z}$ can be constructed directly from a linear optimization oracle for $\mathcal{Z}^*$. Specifically, to check whether a given point $w$ belongs to $\mathcal{Z}$, one can do the following:
\begin{itemize}
    \item if $w_0 \neq 1$, then clearly $w \notin \mathcal{Z}$, and the vector $(1, 0, \dots, 0)$ provides a separating direction; else
    \item We solve the convex optimization problem $\min_{a \in \mathcal{Z}^*} \langle w, a\rangle$. If the optimal value of the problem is non-negative, then $w \in \mathcal{Z}$; else, the minimizer $a^*$ provides a separating direction, as $\langle a^*, w\rangle < 0$ by assumption, and yet $\langle a^*, z\rangle \ge 0$ for all $z \in \mathcal{Z}$ by \Cref{eq:bidual}.
\end{itemize}

To complete the construction, it remains to show that the cone $\mathcal{Z}^*$ of nonnegative polynomials on $[-1,1]$ admits an efficient linear optimization oracle. As mentioned above, this follows from important results in semi-algebraic optimization. In particular, it is a celebrated result that in dimension one, polynomial nonnegativity is intimately connected with the notion of sum-of-squares (SOS) polynomials. A polynomial $q$ of degree at most $2d$ is said to be SOS, denoted $q \in \Sigma_{2d}$, if it can be written in the form
\begin{equation}\label{eq:def sos}
    q(y) = \begin{pmatrix}1 \\ y \\ \vdots \\ y^{d}\end{pmatrix}^\top Q \begin{pmatrix}1 \\ y \\ \vdots \\ y^{d}\end{pmatrix}, \text{ where } Q = \begin{pmatrix} q_{00} & q_{01} & \dots & q_{0d}\\ q_{01} & q_{11} & \dots & q_{1d} \\ \vdots & \vdots & \ddots & \vdots \\ q_{0d} & q_{1d} & \dots & q_{dd}\end{pmatrix}\succeq 0_{d+1} \text{ is a PSD matrix}.
\end{equation}

An application of the Positivstellensatz result for univariate polynomials states that a polynomial $p(y) = a_0 + \dots + a_{2d} y^{2d}$ is nonnegative on $[-1,1]$ if and only if it can be decomposed in the form 
\[
    p(y) = r(y) + (1-y^2) s(y), \quad \text{for } r \in \mathrm{\Sigma}_{2d},~~s \in \mathrm{\Sigma}_{2d - 2}.
\]
The coefficients of the polynomial on the right-hand side depend linearly on the PSD matrices $R$ and $S$ underlying $r$ and $s$. Letting $H : \mathbb{R}^{(d+1)\times (d+1)} \times \mathbb{R}^{d \times d} \to \mathbb{R}^{2d+1}$ denote the mapping from $(R,S)$ to the coefficients of the right-hand side polynomial, we therefore conclude that
\[
\mathcal{Z}^* = \left\{a \in \mathbb{R}^{2d+1} : a = H(R, S), ~~ R \succeq 0_{d+1}, ~~ S \succeq 0_{d} \right\}.
\]
Hence $\mathcal{Z}^*$ is a convex conic domain defined as the intersection between the linear constraint $a = H(R, S)$, and the product of two PSD cones. Minimization of a linear objective over $\mathcal{Z}^*$, as used by the separation oracle for $\mathcal{Z}$, can therefore be solved using standard semidefinite programming techniques.

\paragraph{Solving for $\mu_t$.} Having discussed how to implement a separation oracle for the moment code, we now describe how given a target vector of moments $z_t \in \cZ$, one can produce a discrete distribution $\mu$ over $[-1,1]$ with at most $2d + 1$ atoms in the support, whose moments match $z$, $$\E_{\mu_t}[s(\yt_t)]^\top = (\E_{\mu_t}[\yt_t], \dots, \E_{mu_t}[\yt_t^{2d}])=z_t.$$

To do so, we leverage once again the duality between the moment set $\mathcal{Z}$ and the set $\mathcal{Z}^*$, as captured by \eqref{eq:bidual}. In light of the connection, the statement that $z \in \mathcal{Z}$ is witnessed by the fact that $z_0 = 1$ and $\langle z, a\rangle \ge 0$ for all $a \in \mathcal{Z}^*$. In particular, let
\[
    a^* \in \argmin_{\substack{a \in \mathcal{Z}^*: ~ 1^\top a \ge 1}}~ \langle a, z\rangle,
\]
which can be computed efficiently by semidefinite programming using the ideas we discussed.\footnote{We remark that the feasible set is not empty, since all $a \in \mathcal{Z}^*$ must be such that $a^\top w \ge 0$ for all $w \in \cZ$, and $1 \in \cZ$ since it is the moment vector of the point mass distribution with support $\{1\}$.} Since $(a^*)^\top z \ge 0$, we can assume without loss of generality that the optimal solution satisfies $1^\top a^* = 1$. Since $a^*$ belongs in $\mathcal{Z}^*$, it induces a nonzero polynomial
\[
    p^*(y) \coloneqq a^*_0 + a^*_1 y + \dots + a^*_{2d} y^{2d}, \quad\qquad p^*(y) \ge 0 \quad\forall y \in [-1,1].
\]
Let $\{\gamma_1, \dots, \gamma_m\}$ the distinct zeros of $p^*$ (of course, $0 \le m \le 2d$, since $p^*$ is not identically zero). 

We now show that the set $\{\gamma_0 \coloneqq 1, \gamma_1, \dots, \gamma_m\}$ defines a valid discrete support for a distribution $\mu$ that matches the moments $z$. 
By the first-order optimality conditions, it must then be the case that the gradient of the objective is in the normal cone of the feasible set at $a^*$. Since all constraints are binding by construction, this condition is exactly the existence of convex combination coefficients $\lambda_0, \dots, \lambda_m$ such that
\begin{equation}\label{eq:support lp}
    z = \sum_{j=0}^m \lambda_j \begin{pmatrix}1 \\ \gamma_j^1 \\ \vdots \\ \gamma_j^{2d}\end{pmatrix}.
\end{equation}
In other words, there exists a discrete distribution supported on $\{1, \gamma_1, \dots, \gamma_m\}$ whose moments match $z$. The probability masses $\lambda_j$ can be computed directly by solving for the coefficients $\lambda_j$ in~\eqref{eq:support lp}, which can be done efficiently via a linear program.

\section{The Set of First and Second Moments on the Euclidean Ball}
\label{app:2_cone}
In this section, we provide similar results as in the previous appendix section but for the case where the outcomes $y$ live in the unit ball in $\R^d$. We start by recalling the definition of the moment set:
\begin{align}
    \cZ = \{(v,Q): \exists \; \text{a probability measure } \mu \text{ over } \|y\|_2 \leq 1 \text{ such that } \E_{\mu}[y]=v, \E_{\mu}[yy^\top] = Q\}
\end{align}
Since $\|y\|_2\leq 1$ and $\Tr[Q]=\E_{\mu}\|y\|_2^2$, it must be the case that $\Tr[Q]\leq 1$ and that $Q = \E_{\mu}[yy^\top] \succeq 0$. Furthermore, for $(v,Q)= (\E_{\mu}[y], \E_{\mu}[yy^\top])$, it is also true that:
\begin{align*}
    \E_{\mu}[(y-\E_{\mu}[y])(y-\E_{\mu}[y])^\top] \succeq 0 \iff Q - vv^\top = \E_{\mu}[yy^\top] - \E_{\mu}[y]\E_{\mu}[y]^\top \succeq 0.
\end{align*}
Using these probability facts, stipulating necessary conditions on $(v,Q) \in \cZ$, we conclude that $\cZ \subseteq \mathcal{M}$ where,
\begin{align*}
    \mathcal{M} = \{(v,Q): Q\succeq vv^\top ,Q\succeq0,\Tr[Q] \leq 1\}.
\end{align*}
Next, we show that given any $(v,Q)\in \mathcal{M}$ we can construct a probability measure $\mu$ over points $y$ in the unit ball using \Cref{alg:moment} such that $\E_{\mu}[y]=v$ and $\E_{\mu}[yy^\top] = Q$. This both solves the problem of finding a measure $\mu$ with appropriate moments and shows that $\mathcal{M} \subseteq \cZ$. Since we had already argued that $\cZ \subseteq \mathcal{M}$, this establishes that $\cZ = \mathcal{M}$ and solves the problem of characterizing the set $\cZ$ in terms of PSD constraints.

We establish the correctness of \Cref{alg:moment} in the following proposition. Similar results and constructions have appeared throughout the literature, we include the proof here simply for the sake of having a self-contained exposition.
\begin{proposition}
\label{prop:moment_sampling_2}
Let $(v,Q)$ be any element in $\mathcal{M}$. \Cref{alg:moment} returns an atomic measure $\mu$ supported on $2d+1$ points such that $\E_{\mu}[y]=v$ and $\E_{\mu}[yy^\top] =Q$.
\end{proposition}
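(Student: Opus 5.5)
The plan is to verify directly that the weights returned by \Cref{alg:moment} form a probability vector and that the resulting measure has the prescribed moments. The key observation is that each pair $y_i^\pm = v + t_i^\pm u_i$ contributes along the single direction $u_i$. Note that $\|v\|\le 1$ follows from $vv^\top \preceq Q$ and $\Tr[Q]\le 1$, since this gives $\|v\|^2 \le \Tr[Q] \le 1$.

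\textbf{Roots.} Since $\|u_i\|=1$, the equation $\|v+tu_i\|^2=1$ is the quadratic
\[
t^2 + 2(v^\top u_i)t + \|v\|^2 - 1 = 0 .
\]
The product of its roots is $\|v\|^2-1\le 0$, so there is one root $t_i^+\ge 0$ and one root $t_i^-\le 0$, and all atoms lie on the unit sphere or at $v$.

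The degenerate case $\|v\|=1$ needs separate handling. There $Q\succeq vv^\top$ and $\Tr[Q]\le 1=\Tr[vv^\top]$ force $Q=vv^\top$, so every $\sigma_i=0$. We then drop the terms with $\sigma_i=0$ and return $\mu=\delta(v)$. In the generic case $\|v\|<1$ we have $t_i^+>0>t_i^-$, and $\lambda_i^\pm\ge 0$ because $\sigma_i\ge 0$: both $t_i^+(t_i^+-t_i^-)$ and $t_i^-(t_i^--t_i^+)$ are positive.

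\textbf{First and second moments.} For each $i$ I will check two identities:
\[
\lambda_i^+t_i^+ + \lambda_i^-t_i^- = 0, \qquad \lambda_i^+(t_i^+)^2 + \lambda_i^-(t_i^-)^2 = \sigma_i .
\]
Both follow directly from the formulas. The first holds because $\lambda_i^\pm t_i^\pm = \pm\sigma_i/(t_i^+-t_i^-)$. For the second, the left side equals $\sigma_i\bigl(t_i^+ - t_i^-\bigr)/(t_i^+-t_i^-)=\sigma_i$.

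Writing $\Lambda_i=\lambda_i^++\lambda_i^-$, the mean of $\mu$ is
\[
\lambda_0 v + \sum_i\bigl(\Lambda_i v + (\lambda_i^+t_i^++\lambda_i^-t_i^-)u_i\bigr) = v .
\]
The second moment expands as
\[
vv^\top + \sum_i \Bigl[(\lambda_i^+t_i^+ + \lambda_i^-t_i^-)(vu_i^\top+u_iv^\top) + \bigl(\lambda_i^+(t_i^+)^2+\lambda_i^-(t_i^-)^2\bigr)u_iu_i^\top\Bigr],
\]
using that the weights sum to one. By the two identities this equals $vv^\top+\sum_i\sigma_iu_iu_i^\top=Q$.

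\textbf{Main obstacle: $\lambda_0\ge 0$.} Everything above is mechanical; the remaining issue is showing $\sum_i\Lambda_i\le 1$. I will compute $\Lambda_i = \sigma_i/(-t_i^+t_i^-)$, since
\[
\frac{1}{t^+(t^+-t^-)}-\frac{1}{t^-(t^+-t^-)} = \frac{t^- - t^+}{t^+t^-(t^+-t^-)} = -\frac{1}{t^+t^-}.
\]
By Vieta, $-t_i^+t_i^- = 1-\|v\|^2$, independent of $i$. Hence
\[
\sum_i \Lambda_i = \frac{\Tr[Q-vv^\top]}{1-\|v\|^2} = \frac{\Tr[Q]-\|v\|^2}{1-\|v\|^2} \le 1
\]
exactly when $\Tr[Q]\le 1$. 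This is where the trace constraint of $\mathcal{M}$ is used, and it gives $\lambda_0\ge 0$. The support consists of $v$ together with the $2d$ points $y_i^\pm$, for $2d+1$ atoms in total.
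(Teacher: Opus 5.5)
Your proposal is correct and follows essentially the same route as the paper. Both arguments rest on the two per-direction identities $\lambda_i^+t_i^++\lambda_i^-t_i^-=0$ and $\lambda_i^+(t_i^+)^2+\lambda_i^-(t_i^-)^2=\sigma_i$, on $\Lambda_i=\sigma_i/(1-\|v\|^2)$ from the product of the roots, and on the trace bound $\sum_i\sigma_i\le 1-\|v\|^2$. You are more explicit than the paper about $\lambda_0\ge 0$ and about the degenerate case $\|v\|=1$, where you correctly show $Q=vv^\top$.
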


\begin{proof}
Consider the SVD of $\Sigma = Q - vv^\top = \sum_{i=1}^d \sigma_i u_i u_i^\top$. We can assume without loss of generality that the $u_i$ form an orthonormal basis for $\R^d$ and that $\sigma_i \geq 0$ for all $i$. Also for $i\in [d]$, the equation, $$\|v + tu_i\|_2^2 =1 \iff t^2 + 2tv^\top u_i - (1- \|v\|^2)=0,$$ always has either 2 solutions $t_i^+$ and $t_i^-$ since $\|v\|^2 = \Tr[vv^T] \leq \Tr[Q] \leq 1$ or 1 solution (which is $t=0$) if $\|v\|=1$. If there are 2 solutions, by the quadratic formula $t_i^+t_i^-=-(1- \|v\|_2^2) \leq 0$. Hence we can let $t_i^+ >0$ and $t_i^-< 0$. We will deal with the case where there are 2 solutions that are non-zero ($\|v\|<1$) and address the other case later. We define
$$y_i^+ = v + t_i^+u_i, \;y_i^- = v + t_i^-u_i \quad \text{ and }\quad \lambda_i^+ =\frac{\sigma_i}{t_i^+(t_i^+- t_i^-)}>0, \lambda_i^- =\frac{\sigma_i}{t_i^-(t_i^-- t_i^+)}>0.$$
We can check that $\Lambda_i = \lambda_i^++\lambda_i^- = \sigma_i / (1-\|v\|_2^2)$. Recall that the measure is,
$$\mu = \lambda_0 \delta(v) + \sum_{i=1}^d[\lambda_i^+ \delta(y_i^+) +  \lambda_i^- \delta(y_i^-)],$$ where $\lambda_0 = 1-\sum_{i=1}^d \lambda_i$. By construction, the total weight amongst all of the $2d+1$ points in the measure is:
\begin{align*}
    \lambda_0 + \sum_{i=1}^d \Lambda_i = \lambda_0 + \frac{\sum_{i=1}^d\sigma_i}{1-\|v\|^2}= 1.
\end{align*}
Note that by definition of $(v,Q)$, $1- \|v\|_2^2\geq \Tr[Q]- \|v\|^2_2=\Tr[Q-vv^\top] = \sum_{i=1}^d \sigma_i \geq0$ hence the fraction is well-defined. Here we note that if $\|v\|=1$ (the other case) we are effectively just adding weight on the $v$ so there is no issue.

It remains to show that $\mu$ has the right first and second moments. A direct calculation shows that for every $i$, $\lambda_i^+t_i^+ + \lambda_i^-t^-_i =0$. Using this, and plugging in the definitions of $y_i^+, y_i^-$, we get that 
\begin{align*}
    \E_{\mu}[y] = \lambda_0 v + \sum_{i=1}^d(\lambda_i^+ y_i^+ + \lambda_i^-y_i^-) = \lambda_0 v + \sum_{i=1}^d(\lambda_i^+ + \lambda_i^{-})v  + \sum_{i=1}^d (\lambda_i^+ t_i^++ \lambda_i^- t_i^-) u_i = v
\end{align*}
To verify the second moment, we use the fact that $\lambda_i^+ (t_i^+)^2+ \lambda_i^- (t_i^-)^2 = \sigma_i$ and plug in again the definition of $\mu$ to get that:
\begin{align*}
    \E_{\mu}[(y-v)(y-v)^\top] = \sum_{i=1}^d (\lambda_i^+ (t_i^+)^2 + \lambda_i^- (t_i^-)^2) u_iu_i^\top = \sum_{i=1}^d \sigma_i u_i u_i^\top = Q-vv^\top.
\end{align*}
\end{proof}

\section{Further Details on the Examples in \Cref{sec:examples}}
\label{app:examples}

\paragraph{Predicting Correlated Rain Across the United States.} The particular guarantees stated there are essentially a restatement of \Cref{corr:Rd}. In particular, let $k((x,p), (x',p')) = x^\top x' + p^\top p'$ be the linear, scalar-valued kernel. Then, $\Gamma = I \cdot k$ is a matrix-valued kernel that contains all the functions, 
\begin{align*}
    h(x,p) = A x + C p 
\end{align*}
where $\|h\|_{\cH}^2 = \|A\|_{F}^2 + \|C\|_{F}^2$ and $\|\cdot\|_{F}$ denotes the Frobenius norm. Note that 
\begin{align}
    \|  \Gamma((x,p),(x,p)) \|_{\mathrm{op}} = \| I (\|x\|^2_2 +  \|p\|_2^2)  \|_{\mathrm{op}}
\end{align}
By Assumption $\|x\|_2^2 \leq B^2$. Also, since $p = (\E_{\mu_t}[\yt_t], \E_{\mu_t}[\yt_t\yt_t^\top])$
\begin{align*}
    \|p\|_2^2 = \|\E_{\mu_t}[\yt_t]\|_2^2 + \|\E_{\mu_t}[\yt_t \yt_t^\top]\|_{F}^2  
\end{align*}
The first term is at most 1 since $\yt_t$ is on the unit ball. By Jensen's, $\|\E_{\mu_t}[\yt_t \yt_t^\top]\|_{F}^2  \leq   \E_{\mu_t} \|[\yt_t \yt_t^\top]\|_{F}^2 \leq 1$ and $\|  \Gamma((x,p),(x,p)) \|_{\mathrm{op}} \leq B^2 + 2$.

Defensive Generation guarantees indistinguishability with respect to all functions of the form $f(x,p,y) = h(x,p)^\top s(y)$ where as in \Cref{eq:f_vec}  $s(y)$ is a vector containing $\E_{\mu_t}[\yt_t]$ and $\E_{\mu_t}[\yt_t \yt^\top_t]$. 

The specific examples in that section correspond to cases where $A$ and $B$ are zero everywhere except for one row corresponding to a specific entry in $s(y)$. In this case, the Frobenius norm of the matrices just becomes the $\ell_2$ norm of that row. The rest of the statement follows from \Cref{corr:Rd}.

\end{document}